\newtheorem{definition}{Definition}
\newtheorem{theorem}[definition]{Theorem}
\newtheorem{proposition}[definition]{Proposition}
\newtheorem{example}[definition]{Example}
\newtheorem{remark}[definition]{Remark}
\newtheorem*{lemmaA}{Lemma A}
\newtheorem*{lemmaB}{Lemma B}
\newcolumntype{P}[1]{>{\centering\arraybackslash}p{#1}} 
\newcolumntype{M}[1]{>{\centering\arraybackslash}m{#1}} 
\def\layersep{3cm}
\DeclareRobustCommand{\dotminus}{%
  \mathbin{%
    \ooalign{%
      \hss\raise1ex\hbox{\kern-0.005em.}\hss\cr
      \mathsurround=0pt$-$%
    }%
  }%
}
\title{Machine Learning as\\ Iterated Belief Change \`{a} la Darwiche and Pearl}
\author{Theofanis Aravanis~\orcidlink{0000-0003-0329-3200}}
\date{\small
Department of Digital Systems\\ 
School of Economics and Technology\\
University of the Peloponnese\\ 
Sparta 231 00, Greece\\
\texttt{taravanis@uop.gr} 
}
\providecommand{\keywords}[1]{\textbf{\textit{Keywords:}} #1}
\begin{document}

\maketitle
\sloppy

\begin{abstract}
Artificial Neural Networks (ANNs) are powerful machine-learning models capable of capturing intricate non-linear relationships. They are widely used nowadays across numerous scientific and engineering domains, driving advancements in both research and real-world applications. In our recent work \cite{aravanis25a}, we focused on the statics and dynamics of a particular subclass of ANNs, which we refer to as binary ANNs. A binary ANN is a feed-forward network in which both inputs and outputs are restricted to binary values, making it particularly suitable for a variety of practical use cases. Our previous study approached binary ANNs through the lens of belief-change theory, specifically the Alchourr\'{o}n, G\"{a}rdenfors and Makinson (AGM) framework, yielding several key insights. Most notably, we demonstrated that the knowledge embodied in a binary ANN (expressed through its input-output behaviour) can be symbolically represented using a propositional logic language. Moreover, the process of modifying a belief set (through revision or contraction) was mapped onto a gradual transition through a series of intermediate belief sets. Analogously, the training of binary ANNs was conceptualized as a sequence of such belief-set transitions, which we showed can be formalized using full-meet AGM-style belief change. In the present article, we extend this line of investigation by addressing some critical limitations of our previous study. Specifically, we show that Dalal's method for belief change provides a natural basis for a structured, gradual evolution of states of belief. More importantly, given the known shortcomings of full-meet belief change, we demonstrate that the training dynamics of binary ANNs can be more effectively modelled using robust AGM-style change operations ---namely, lexicographic revision and moderate contraction--- that align with the Darwiche-Pearl framework for iterated belief change.
\end{abstract}

\vspace{4mm}

\keywords{Machine Learning, Iterated Belief Change, Darwiche and Pearl's Approach, Gradual Belief Transitions}

\section{Introduction}

Belief change is the study of how an agent's set of beliefs should be modified ---typically through {\em revision} and/or {\em contraction}--- when confronted with new evidence \cite{gard88,peppas08,ferme18}. The foundations of this field were laid by the influential work of Alchourr\'{o}n, G\"{a}rdenfors and Makinson \cite{agm85}, who introduced a formal model for belief revision and contraction. This model, now known as the {\em AGM framework}, has since become a standard and widely accepted approach for analysing belief change.

Within the AGM framework, a logical theory $K$ (also referred to as {\em belief set}) represents the belief corpus of an agent, while a sentence $\varphi$ represents new evidential input. Belief-change operations are modelled as binary functions, mapping the pair $(K,\varphi)$ to a modified theory, reflecting the result of incorporating or discarding information. The behaviour of these operations is governed by a set of well-accepted rationality postulates.

Since the foundational work of the AGM trio, several concrete belief-change operations have been proposed. One fundamental ---albeit rather trivial--- approach is {\em full-meet} belief change \cite{agm85}. Although this method adheres to the AGM rationality postulates, it has been criticized for producing implausible outcomes \cite{rott00}. A more refined alternative that also respects the AGM postulates is Dalal's belief-change method, which is based on {\em Hamming distance} \cite{dalal88}. Both these schemes will be useful for later analysis.

While the AGM postulates rigorously define the behaviour of {\em single-step} belief change ---that is, the transition from an initial belief set $K$ to its modified form--- they offer little guidance on how to handle {\em sequences} of belief adjustments. This gap is addressed by the theory of {\em iterated} belief change, which focuses on the relationships between successive modifications and how earlier transformations influence subsequent ones \cite{peppas14}. One of the most influential proposals to iterated revision is the work of Darwiche and Pearl
(``DP'' for short), which extends the AGM postulates with additional principles that regulate iterated revisions \cite{darwiche94,darwiche97}. Darwiche and Pearl's approach was later adapted to the context of belief contraction by Konieczny and Pino P\'{e}rez \cite{konieczny17}. Among the most prominent DP-compliant concrete approaches are {\em lexicographic} revision \cite{nayak94,nayak03} and {\em moderate} contraction \cite{ramachandran12}.

In a different vein, one may argue that the process of {\em machine learning} inherently involves a form of belief change. That is, a machine-learning model maintains a body of knowledge (some form of ``beliefs''), which it modifies in response to incoming training data. This perspective underpins our recent study \cite{aravanis25a}, which examined the statics and dynamics of a certain class of Artificial Neural Networks (ANNs) that we term {\em binary} ANNs. A binary ANN is a feed-forward ANN with all inputs and outputs consisting of {\em binary values}, and as such, it is well-suited for various practical applications, such as image processing and pattern recognition on datasets akin to the benchmark MNIST dataset \cite{lecun98}. Binary ANNs were extensively studied in \cite{aravanis25a} through the lens of belief-change theory, leading to several key contributions, including the following:

\begin{itemize}
\item The knowledge of binary ANNs (expressed via their input-output relationship) can {\em symbolically} be represented in terms of a {\em propositional logic language}; specifically, by means of a collection of belief sets. 

\item Within the belief-change framework, the transformation from an initial belief set to a modified one can be conceptualized as a {\em gradual} transition through intermediate states of belief, a perspective that aligns more closely with the cognitive behaviour of real-world agents \cite{lao19,schwitzgebel99}. Analogously, the {\em training} process of binary ANNs, implemented via the fundamental {\em backpropagation} algorithm \cite{rumelhart86}, may be interpreted as a sequence of {\em incremental transformations} between belief sets. Each step in this sequence can be quantitatively characterized by a {\em distance metric} that captures the degree of symbolic divergence between successive belief sets.

\item The aforementioned successive transitions of belief sets can be modelled by means of {\em full-meet} belief change.
\end{itemize}

In this article, we push further towards this line of research, establishing the following contributions:

\begin{itemize}
\item We show that Dalal's method for belief change provides a natural basis for constructing gradual sequences of intermediate belief sets, with transitions organized according to a corresponding distance metric.

\item By means of a {\em refinement} of a distance metric between belief sets proposed in \cite{aravanis25a}, we demonstrate that the successive transitions of belief sets, followed by a binary ANN during training, can be modelled through {\em DP-compatible} AGM revision and contraction operations. In particular, we employ lexicographic revision and moderate contraction, which offer a more robust and well-behaved alternative to the full-meet belief change approach of \cite{aravanis25a}. This shift addresses several documented limitations of full-meet belief change and provides a more principled foundation for understanding belief dynamics in binary ANNs.
\end{itemize}

Evidently, this study is situated at the intersection of neuro-symbolic (hybrid) Artificial Intelligence (AI), with a particular emphasis on integrating AGM-style belief change into machine-learning systems. Although this direction remains under-explored, it is related not only to recent neuro-symbolic work \cite{nawaz25}, but also to a learning-theoretic and dynamic-epistemic line of research on belief revision. In particular, Kelly \cite{kelly98} showed that stricter forms of minimal change may substantially restrict learning power, even leading to forms of ``inductive amnesia'', thereby suggesting that learning power should itself constrain the design of concrete revision policies. Along similar lines, Baltag, Gierasimczuk and Smets \cite{baltag19} studied conditioning \cite{Rott89}, lexicographic revision and minimal revision \cite{Boutilier16} as learning methods, showing that conditioning and lexicographic revision can be universal truth-tracking methods under suitable assumptions, whereas minimal revision is not universal in general. Baltag and Smets \cite{baltag11} further investigated convergence to truth under truthful iterated upgrades, establishing positive convergence results for updates and radical upgrades on finite plausibility models, while analogous results fail for conservative upgrades. At the logical level, Baltag {\em et al.} \cite{baltag19b} introduced a dynamic logic for learning theory, with observation modalities and a learning operator, capable of expressing notions such as stable belief and identifiability in the limit. Most recently, Baccini {\em et al.} \cite{baccini25} refined the picture for minimal revision, showing that although it is not universal in general, it is universal on finitely identifiable spaces and on finite negation-closed settings, while also characterizing suitable priors for minimal revision, conditioning and lexicographic upgrade.

Closer to the present work in the machine-learning setting are the contributions of Coste-Marquis and Marquis \cite{marquis21}, Schwind {\em et al.} \cite{schwind23}, and Schwind {\em et al.} \cite{schwind25}. The first two contributions \cite{marquis21,schwind23} approach the incorporation of symbolic background knowledge into Boolean classifiers through the lens of belief revision, identifying the limitations of traditional belief-change operations and proposing specialized editing mechanisms to adapt classifiers with minimal disruption. The most recent work \cite{schwind25} is directly aligned with our study, introducing a learning framework grounded in {\em improvement operators} \cite{konieczny08,konieczny10} --- a generalization of iterated belief change that permits gradual, noise-tolerant updates. Focusing on binary classification, Schwind {\em et al.} show that such improvement-based models perform competitively when benchmarked against standard machine-learning approaches. Our contribution builds upon this line of research by also leveraging iterated belief change, but rather than developing a belief-change-driven learning system, we focus on modelling the training dynamics of binary ANNs using well-established iterated belief-change operators.

The remainder of this article is structured as follows. The next section establishes the formal prerequisites necessary for our discussion. Section~\ref{section_agm} presents the AGM framework, along with two notable belief-change operations, Dalal's method and full-meet contraction. Section~\ref{section_change_gradual} explores belief change as a process involving gradual transitions through intermediate states of belief. Section~\ref{section_iteration} provides an overview of iterated belief change and its guiding principles. The subsequent sections examine the connection between machine learning and belief change. Specifically, Section~\ref{section_binary_ann} introduces the class of binary ANNs, while Section~\ref{section_ml_dp} demonstrates the compatibility of machine learning with the DP approach.

\section{Formal Prerequisites}
\label{section_preliminaries}

This section provides the formal groundwork for the discussion that follows.

\vspace{2mm}

\noindent {\bf Logic Language:} Throughout this study, we shall work with a propositional language $\mathbb{L}$, built over {\em finitely many} propositional variables (atoms), using the standard Boolean connectives $\wedge$ (conjunction), $\vee$ (disjunction), $\rightarrow$ (implication), $\leftrightarrow$ (equivalence), $\neg$ (negation), and governed by {\em classical propositional logic}. The finite, non-empty set of all propositional variables is denoted by $\mathcal{P}$. The classical inference relation is denoted by $\models$. The symbol $\top$ denotes an arbitrary tautological sentence of $\mathbb{L}$.

\vspace{2mm}

\noindent {\bf Belief Sets:} For a set of sentences $\Gamma$ of $\mathbb{L}$, $Cn(\Gamma)$ denotes the set of all logical consequences of $\Gamma$; i.e., $Cn(\Gamma) = \big\{\varphi\in\mathbb{L}:\Gamma\models\varphi\big\}$. For sentences $\varphi_{1},\ldots,\varphi_{n}$ of $\mathbb{L}$, we shall write $Cn(\varphi_{1},\ldots,\varphi_{n})$ as an abbreviation of $Cn\big(\{\varphi_{1},\ldots,\varphi_{n}\}\big)$. For any two sentences $\varphi,\psi$ of $\mathbb{L}$, we write $\varphi\equiv\psi$ iff $Cn(\varphi) = Cn(\psi)$. An agent's set of beliefs will be modelled by a {\em theory}, also referred to as a {\em belief set}. A theory $K$ is a set of sentences of $\mathbb{L}$ closed under logical consequence; that is, $K = Cn(K)$. As we shall subsequently introduce formal properties of revision and contraction functions, let us first define the simpler operation of {\em expansion}. Accordingly, for a theory $K$ and a sentence $\varphi$ of $\mathbb{L}$, the expansion of $K$ by $\varphi$, denoted by $K + \varphi$, is defined as $K + \varphi = Cn\big(K \cup \{\varphi\}\big)$.

\vspace{2mm}

\noindent {\bf Possible Worlds:} A {\em literal} is a propositional variable $p \in \mathcal{P}$ or its complement (negation). For a finite set of literals $Q$, $|Q|$ denotes the cardinality of $Q$. A {\em possible world} (also named as world, model or interpretation) $r$ is a consistent set of literals, such that, for any propositional variable $p\in\mathcal{P}$, either $p\in r$ or $\neg p\in r$. For a propositional variable $p$ and a world $r$, $p \in r$ means that $p$ is assigned \texttt{true} in $r$, whereas, $\neg p \in r$ means that $p$ is assigned \texttt{false} in $r$. The set of all possible worlds is denoted by $\mathbb{M}$. For a sentence or set of sentences $\varphi$ of $\mathbb{L}$, $[\varphi]$ is the set of worlds implying $\varphi$; that is, $[\varphi] = \big\{ r\in\mathbb{M}:r\models\varphi\big\}$. For the sake of readability, possible worlds will sometimes be represented as sequences (rather than sets) of literals, and the complement of a propositional variable $p$ will be represented as $\overline{p}$, instead of $\neg p$.

\vspace{2mm}

\noindent {\bf Preorders:} A {\em preorder} over a set of possible worlds $M$ is any reflexive and transitive binary relation in $M$. A preorder $\preceq$ is called {\em total} iff, for any $r,r' \in M$, $r \preceq r'$ or $r' \preceq r$. The strict part of $\preceq$ is denoted by $\prec$; i.e., $r \prec r'$ iff $r \preceq r'$ and $r' \npreceq r$. The indifference part of $\preceq$ is denoted by $\approx$; i.e., $r \approx r'$ iff $r \preceq r'$ and $r' \preceq r$. Lastly, $\min(M,\preceq)$ denotes the set of all $\preceq$-minimal possible worlds of $M$; i.e., \mbox{$\min(M,\preceq) = \Big\{r \in M:$ for all $r' \in M$, $r' \preceq r$ entails $r \preceq r' \Big\}$}. When $M$ contains numbers, we simply write $\min(M)$ to denote the minimum number in $M$. 

\vspace{2mm}

\noindent {\bf Boolean Functions:} A ($n$-ary) {\em Boolean function} $f$ is a function that maps every possible combination of $n$ input binary variables to a single binary output ($0$ or $1$); in symbols, $f:\{0,1\}^n \mapsto \{0,1\}$. An example of a ($2$-ary) Boolean function is a Boolean function $f$ that implements the logical operation \texttt{AND}, according to which $f(0,0) = 0$, $f(0,1) = 0$, $f(1,0) = 0$, and $f(1,1) = 1$.

\vspace{2mm}

\noindent {\bf Artificial Neural Networks:} A {\em feed-forward Artificial Neural Network} (ANN) is a computational model that can be formally specified through a directed acyclic graph. The roots of the graph are the inputs $X_1,\dots,X_n$ of the ANN, which form its {\em input layer}. The leaves of the graph are the outputs $y_1,\ldots,y_m$ of the ANN, which form its {\em output layer}. Each node of the graph is called an {\em artificial neuron} or simply {\em neuron}. Neurons that do not belong to the input/output layers of the ANN pertain to the {\em hidden} layer(s) of the ANN. The topology of a representative feed-forward ANN, with a single hidden layer, is depicted in Figure~\ref{fig_ann_topology}.

Each neuron in a feed-forward ANN receives inputs either from the input layer or from the outputs of neurons in a preceding layer. It computes a {\em weighted sum} of these inputs, where each input is multiplied by a {\em weight} specific to the corresponding connection. The neuron then applies a {\em bias} term, adding a constant to the weighted inputs. The resulting sum is then passed through a non-linear {\em activation function} $\sigma$, such as a sigmoid, Rectified Linear Unit (ReLU), or softmax function. Thus, the output $z$ of each neuron can be expressed as

$$z = \sigma\bigg( \sum_{i} w_i\cdot x_i + b \bigg),$$

\noindent where $w_i$'s represents the weight coefficients, $b$ is the bias, and $x_i$'s are the input values from a previous layer. As we will explore in Subsection~\ref{subsection_backprop}, {\em training} an ANN entails repeatedly adjusting its parameters (i.e., the weights $w_i$'s and biases $b$ associated with each neuron) to reduce the difference between the network's predicted outputs and the actual or desired outputs, thereby enhancing its accuracy. 

\begin{figure}[t]
\centering
\begin{tikzpicture}[shorten >=1pt,->,draw=black!50, node distance=\layersep, >=stealth]
    \tikzstyle{every pin edge}=[<-,shorten <=1pt]
    \tikzstyle{neuron}=[circle,fill=black!25,minimum size=17pt,inner sep=0pt]
    \tikzstyle{input neuron}=[neuron, fill=gray!60];
    \tikzstyle{output neuron}=[neuron, fill=gray!60];
    \tikzstyle{hidden neuron}=[neuron, fill=gray!25];
    \tikzstyle{annot} = [text width=7em, text centered]

    \foreach \name / \y in {1,...,4}
        \node[input neuron, pin=left: $X_\y$] (I-\name) at (0,-\y) {};
    \node[] (I-5) at (0,-4.65 cm) {$\vdots$};
    \node[input neuron, pin=left: $X_n$] (I-6) at (0,-5.5 cm) {};
    
    \foreach \name / \y in {1,...,5}
        \path[yshift=0.5cm]
            node[hidden neuron] (H-\name) at (\layersep,-\y cm) {};
    \node[] (H-6) at (\layersep,-5.15 cm) {$\vdots$};
    \node[hidden neuron] (H-7) at (\layersep,-6 cm) {};           
          
    \foreach \name / \y in {1,...,3}   
       \path[yshift=-0.6cm]    
        node[output neuron, pin={[pin edge={->}]right: $y_\y$}] (O-\name) at (6,-\y cm) {};    
        \node[] (O-4) at (6,-4.22 cm) {$\vdots$};
        \node[output neuron, pin={[pin edge={->}]right: $y_m$}] (O-5) at (6,-5 cm) {};
     
    \foreach \source in {1,...,4,6}
        \foreach \dest in {1,...,5,7}
            \path (I-\source) edge (H-\dest);
    
    \foreach \source in {1,...,5,7}
        \foreach \dest in {1,...,3,5}
            \path (H-\source) edge (O-\dest);

    \node[annot,above of=H-1, node distance=0.6cm] (hl) {\footnotesize Hidden Layer};
    \node[annot,left of=hl] {\footnotesize Input Layer};
    \node[annot,right of=hl] {\footnotesize Output Layer};
\end{tikzpicture}
\caption{A feed-forward ANN with a single hidden layer.}
\label{fig_ann_topology}
\end{figure}
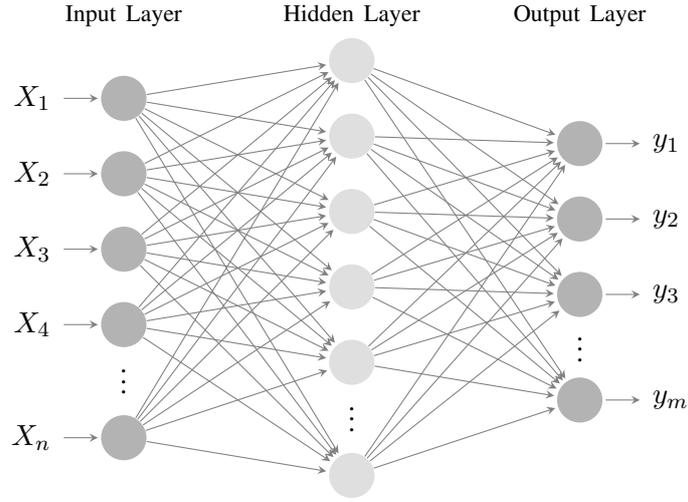

\section{The AGM Framework}
\label{section_agm}

The task of modifying beliefs was formally addressed by Alchourr\'{o}n, G\"{a}rdenfors and Makinson, who introduced a flexible and influential model for belief change, now widely known as the {\em AGM framework} \cite{agm85}. In this framework, an agent's beliefs are captured by a logical theory $K$ (also called a {\em belief set}), while new information (referred to as {\em epistemic input}) is expressed as a logical formula $\varphi$. The AGM framework focuses on two core types of belief modification: {\em belief revision} (or simply, revision) and {\em belief contraction} (or simply, contraction). The AGM trio provided an {\em axiomatic} characterization of both operations using a set of widely accepted {\em rationality postulates}. Later, Katsuno and Mendelzon proposed a characterization of belief revision in terms of possible worlds \cite{katsuno91}. In this section, we outline the AGM framework's axiomatic foundation (Subsection~\ref{subsection_agm_postulates}), the possible-worlds semantics introduced by Katsuno and Mendelzon (Subsection~\ref{subsection_agm_semantic}), and two specific AGM-style belief-change strategies: full-meet belief change \cite{agm85} (Subsection~\ref{subsection_full_meet}) and Dalal's approach \cite{dalal88} (Subsection~\ref{subsection_dalal_operator}).

\subsection{Axiomatic Characterization}
\label{subsection_agm_postulates}

Within the AGM framework \cite{agm85}, the process of belief revision is formalized through a {\em revision function}. A revision function $\ast$ is a binary function that takes as input a belief set $K$ and a formula $\varphi$, and produces a new belief set $K \ast \varphi$, which reflects the outcome of incorporating $\varphi$ into $K$. A revision function $\ast$ is considered an {\em AGM revision function} iff it satisfies postulates \mbox{$(K\ast1)$–$(K\ast8)$}.

\vspace{4mm}

{\renewcommand{\arraystretch}{1.5}
\noindent \begin{tabular}{l l}
$\bf (K\ast1)$ & $K\ast\varphi$ is a theory. \\

$\bf (K\ast2)$ & $\varphi\in K\ast\varphi$. \\

$\bf (K\ast3)$ & $K\ast\varphi \subseteq K+\varphi$. \\

$\bf (K\ast4)$ & If $\neg\varphi\notin K$, then $K + \varphi \subseteq K\ast\varphi$. \\

$\bf (K\ast5)$ & If $\varphi$ is consistent, then $K \ast \varphi$ is also consistent. \\

$\bf (K\ast6)$ & If $\varphi \equiv \psi$, then $K\ast\varphi = K\ast\psi$.\\

$\bf (K\ast7)$ & $K\ast(\varphi\wedge\psi) \subseteq (K\ast\varphi)+\psi$. \\

$\bf (K\ast8)$ & If $\neg\psi\notin K\ast\varphi$, then $(K\ast\varphi)+\psi \subseteq K\ast(\varphi\wedge\psi)$.
\end{tabular}}

\vspace{4mm}

\begin{remark} \label{rem_expansion}
When $\varphi$ is consistent with $K$ (i.e., $\neg\varphi \notin K$), it follows from postulates $(K\ast3)$ and $(K\ast4)$ that revision simplifies to a mere expansion, meaning that $K \ast \varphi = K + \varphi$.
\end{remark}

Similarly, the process of {\em contraction} is captured by a {\em contraction function}. A contraction function $\dotminus$ is a binary function that takes a belief set $K$ and a formula $\varphi$, and returns a new belief set $K \dotminus \varphi$, which represents the result of removing $\varphi$ from $K$. A contraction function $\dotminus$ is classified as an {\em AGM contraction function} iff it adheres to postulates \mbox{$(K\dotminus 1)$–$(K\dotminus 8)$} \cite{agm85}.

\vspace{4mm}

{\renewcommand{\arraystretch}{1.5}
\noindent \begin{tabular}{l l}
$\bf (K\dotminus 1)$ & $K\dotminus\varphi$ is a theory. \\

$\bf (K\dotminus 2)$ & $K\dotminus\varphi \subseteq K$. \\

$\bf (K\dotminus 3)$ & If $\varphi \notin K$, then $K\dotminus \varphi = K$. \\

$\bf (K\dotminus 4)$ & If $\varphi$ is not tautological, then $\varphi \notin K\dotminus \varphi$. \\

$\bf (K\dotminus 5)$ & If $\varphi \in K$, then $K \subseteq (K\dotminus \varphi) + \varphi$. \\

$\bf (K\dotminus 6)$ & If $\varphi\equiv\psi$, then $K\dotminus\varphi = K\dotminus\psi$.\\

$\bf (K\dotminus 7)$ & $(K\dotminus\varphi) \cap (K\dotminus\psi) \subseteq K\dotminus(\varphi \wedge \psi)$. \\

$\bf (K\dotminus 8)$ & If $\varphi \notin K\dotminus(\varphi \wedge \psi)$, then $K\dotminus(\varphi \wedge \psi) \subseteq K\dotminus\varphi$.
\end{tabular}}

\vspace{4mm}

\begin{remark} \label{rem_tautology}
When $\varphi$ is a tautological sentence (i.e., $\varphi \equiv \top$), it follows from postulates $(K\dotminus2)$ and $(K\dotminus5)$ that $K\dotminus\varphi = K$.
\end{remark}

The core idea underlying postulates \mbox{$(K\ast1)$–$(K\ast8)$} and \mbox{$(K\dotminus 1)$–$(K\dotminus 8)$} is the principle of {\em minimal change}, which asserts that the belief set $K$ should be modified as little as possible in response to new epistemic input $\varphi$. For an in-depth discussion of these postulates, the reader is referred to G\"{a}rdenfors \cite[Chapter~3]{gard88} and Peppas \mbox{\cite[Section~8.3]{peppas08}}.

\subsection{Semantic Characterization}
\label{subsection_agm_semantic}

Postulates \mbox{$(K\ast 1)$–$(K\ast 8)$} do {\em not uniquely} determine the revised belief set $K \ast \varphi$ based solely on $K$ and $\varphi$; rather, they delineate the range of all rationally acceptable ways to carry out belief revision. To precisely define the revised set $K \ast \varphi$, one must turn to {\em constructive models} of belief change --- extra-logical mechanisms that encode specific strategies for modifying beliefs. A widely adopted approach of this kind is the model introduced by Katsuno and Mendelzon \cite{katsuno91}, which relies on a particular class of total preorders over possible worlds known as {\em faithful preorders}.

\begin{definition}[Faithful Preorder, \cite{katsuno91}] \label{def_faithfulness}
A total preorder $\preceq_K$ over $\mathbb{M}$ is faithful to a belief set $K$ iff $[K] \neq \varnothing$ entails \mbox{$[K] = \min(\mathbb{M},\preceq_K)$}.
\end{definition}

Intuitively, a faithful preorder $\preceq_{K}$ over the set of possible worlds $\mathbb{M}$ captures the {\em relative plausibility} of those worlds with respect to the belief set $K$. In this ordering, worlds that are considered more plausible given $K$ are ranked lower.

Based on the notion of a faithful preorder, Katsuno and Mendelzon prove the next important result.

\begin{theorem}[\cite{katsuno91}] \label{thm_represent_revision}
A revision function $\ast$ satisfies postulates \mbox{$(K\ast1)$--$(K\ast8)$} iff, for any theory $K$, there exists a total preorder $\preceq_K$ over $\mathbb{M}$, faithful to $K$, such that, for any sentence $\varphi\in\mathbb{L}$:

\begin{center}
\begin{tabular}{l l}
{\bf (R)} & $[K\ast\varphi] = \min([\varphi],\preceq_K)$.
\end{tabular}
\end{center}
\end{theorem}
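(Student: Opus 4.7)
The plan is to split the theorem into its two directions and handle the routine direction first before tackling the constructive one.

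For the \emph{soundness} direction (given a faithful total preorder $\preceq_K$ for every $K$, the operation defined by $(R)$ satisfies the AGM postulates), I would verify each of $(K\ast1)$--$(K\ast8)$ in turn. Most are immediate: deductive closure of the theory of a set of worlds gives $(K\ast1)$; $\min([\varphi],\preceq_K)\subseteq[\varphi]$ gives $(K\ast2)$; faithfulness combined with the fact that $\min([\varphi],\preceq_K)$ coincides with $[K]\cap[\varphi]$ whenever the latter is non-empty delivers $(K\ast3)$, $(K\ast4)$ and, via Remark~\ref{rem_expansion}, the consistency clause $(K\ast5)$ using the finiteness of $\mathcal{P}$; and $(K\ast6)$ follows because $[\varphi]$ depends only on the equivalence class of $\varphi$. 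For the supplementary postulates $(K\ast7)$ and $(K\ast8)$, I would invoke the standard property of total preorders: if $\min([\varphi],\preceq_K)\cap[\psi]\neq\varnothing$, then $\min([\varphi\wedge\psi],\preceq_K)=\min([\varphi],\preceq_K)\cap[\psi]$.

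For the \emph{completeness} direction, I would construct $\preceq_K$ from $\ast$ as follows. Because $\mathcal{P}$ is finite, for any non-empty $S\subseteq\mathbb{M}$ there exists a sentence $\varphi_S\in\mathbb{L}$ with $[\varphi_S]=S$; for brevity write $\varphi_{r,r'}$ for $\varphi_{\{r,r'\}}$. Define
\[
r \preceq_K r' \;\iff\; r \in [K] \;\text{ or }\; r \in [K\ast\varphi_{r,r'}].
\]
I would then verify reflexivity (trivial from $(K\ast2)$), totality (from $(K\ast2)$ and $(K\ast5)$ applied to the consistent $\varphi_{r,r'}$), and transitivity (by comparing the revisions by $\varphi_{r,r'}$, $\varphi_{r',r''}$, $\varphi_{r,r''}$ against the revision by the three-world formula $\varphi_{\{r,r',r''\}}$, and then invoking $(K\ast7)$ and $(K\ast8)$). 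Faithfulness is established by observing that worlds in $[K]$ are $\preceq_K$-minimal by construction, while for $r\notin[K]$ and $r'\in[K]$ the sentence $\varphi_{r,r'}$ is consistent with $K$, so Remark~\ref{rem_expansion} yields $K\ast\varphi_{r,r'}=K+\varphi_{r,r'}$, whose sole model is $r'$; hence $r\not\preceq_K r'$.

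The remaining---and pivotal---step is to establish condition $(R)$ itself. For the $\subseteq$-inclusion, I would pick $r\in[K\ast\varphi]$ and an arbitrary $r'\in[\varphi]$; applying $(K\ast7)$ and $(K\ast8)$ to $\varphi$ and $\varphi_{r,r'}$ (whose conjunction is equivalent to $\varphi_{r,r'}$, since $[\varphi_{r,r'}]\subseteq[\varphi]$) forces $r\in[K\ast\varphi_{r,r'}]$, hence $r\preceq_K r'$. The reverse inclusion runs symmetrically: an arbitrary $\preceq_K$-minimal element of $[\varphi]$ must belong to $[K\ast\varphi]$, again via $(K\ast7)$ and $(K\ast8)$ applied to the probe formulas.

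The main obstacle is the disciplined use of $(K\ast7)$ and $(K\ast8)$ in two places---transitivity of $\preceq_K$ and the semantic equation---where information about the revision by an arbitrary $\varphi$ must be transferred to, and extracted from, the revisions by the two-world probes $\varphi_{r,r'}$. The finiteness of $\mathcal{P}$ is indispensable throughout, as it guarantees that every finite set of worlds is definable by some sentence of $\mathbb{L}$, which is precisely what legitimises the ``probe'' construction.
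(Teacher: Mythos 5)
Your proposal is correct and follows essentially the same route as the paper's source: the paper does not prove this theorem itself but imports it from Katsuno and Mendelzon, and your argument (soundness by checking $(K\ast1)$--$(K\ast8)$ directly against condition (R), completeness via the canonical preorder $r \preceq_K r'$ iff $r \in [K]$ or $r \in [K\ast\varphi_{r,r'}]$, using two- and three-world probe formulas, the finiteness of $\mathcal{P}$, and the $(K\ast7)$/$(K\ast8)$ interplay) is precisely the standard Katsuno--Mendelzon proof for the finite propositional setting. No substantive gap; a full write-up would only need to make explicit the precondition of $(K\ast8)$ at each application (e.g.\ that $r \in [K\ast\varphi]\cap[\varphi_{r,r'}]$ guarantees $\neg\varphi_{r,r'}\notin K\ast\varphi$) and the degenerate case of inconsistent $K$, where faithfulness is vacuous and the same construction still yields (R).
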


Therefore, under condition (R), the revised belief set $K \ast \varphi$ is defined in terms of the most plausible $\varphi$-worlds, with respect to $K$.

Combining condition (R) with the well-known Levi and Harper Identities \cite{levi80,harper77}, a possible-worlds characterization for belief contraction can be deduced as well \cite[Section~7]{caridroit17}.

\begin{theorem}[\cite{caridroit17}] \label{thm_represent_contraction}
A contraction function $\dotminus$ satisfies postulates \mbox{$(K\dotminus1)$--$(K\dotminus8)$} iff, for any theory $K$, there exists a total preorder $\preceq_K$ over $\mathbb{M}$, faithful to $K$, such that, for any sentence $\varphi\in\mathbb{L}$:

\begin{center}
\begin{tabular}{l l}
{\bf (C)} & $[K\dotminus\varphi] = [K]\cup \min([\neg\varphi],\preceq_K)$.
\end{tabular}
\end{center}
\end{theorem}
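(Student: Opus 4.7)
The plan is to reduce this contraction representation theorem to the already-established revision representation theorem (Theorem~\ref{thm_represent_revision}) by exploiting the two bridges between revision and contraction: the Levi Identity $K\ast\varphi = (K\dotminus\neg\varphi)+\varphi$ and the Harper Identity $K\dotminus\varphi = K\cap(K\ast\neg\varphi)$. The crucial semantic observation driving the translation is that, for any two theories $K,K'$, one has $[K\cap K'] = [K]\cup[K']$; this is what turns Harper's intersection of theories into a union of world-sets, matching the shape of condition (C).

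For the \emph{only-if} direction, I would start from a contraction function $\dotminus$ satisfying $(K\dotminus 1)$--$(K\dotminus 8)$ and define an associated revision operator $\ast$ via the Levi Identity. A standard verification (originally due to G\"{a}rdenfors) shows that this $\ast$ satisfies $(K\ast 1)$--$(K\ast 8)$; I would either cite this or walk through the postulates one-by-one, using $(K\dotminus 1)$--$(K\dotminus 8)$ together with the definition of expansion. Theorem~\ref{thm_represent_revision} then supplies a total preorder $\preceq_K$ faithful to $K$ such that $[K\ast\varphi] = \min([\varphi],\preceq_K)$ for every $\varphi$. Substituting $\neg\varphi$ for $\varphi$ and applying the Harper Identity yields
\[
[K\dotminus\varphi] \;=\; [K\cap(K\ast\neg\varphi)] \;=\; [K]\cup[K\ast\neg\varphi] \;=\; [K]\cup\min([\neg\varphi],\preceq_K),
\]
which is exactly condition (C). The Harper Identity itself is a consequence of the AGM postulates for contraction, so I would invoke it as a well-known result rather than reprove it.

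For the \emph{if} direction, given a faithful preorder $\preceq_K$ satisfying (C), I would define $\ast$ from the same preorder via (R) and verify that $\dotminus$ and $\ast$ are linked by the Harper Identity, i.e.\ $K\dotminus\varphi = K\cap(K\ast\neg\varphi)$; this is immediate from the world-set equation $[K]\cup\min([\neg\varphi],\preceq_K) = [K]\cup[K\ast\neg\varphi]$. Theorem~\ref{thm_represent_revision} tells us $\ast$ satisfies $(K\ast 1)$--$(K\ast 8)$, and standard results (again originating from G\"{a}rdenfors) show that a contraction defined from an AGM revision through Harper automatically satisfies $(K\dotminus 1)$--$(K\dotminus 8)$. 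I would therefore cite this translation theorem, or verify the postulates directly from (C): $(K\dotminus 1)$ is clear since $[K\dotminus\varphi]$ is a set of worlds; $(K\dotminus 2)$ follows from $[K]\subseteq[K\dotminus\varphi]$; $(K\dotminus 3)$ from the fact that $\varphi\notin K$ means $[K]\not\subseteq[\varphi]$, so some world of $[K]$ lies in $[\neg\varphi]$ and, by faithfulness, is $\preceq_K$-minimal there; $(K\dotminus 4)$ from the non-emptiness of $\min([\neg\varphi],\preceq_K)$ whenever $\varphi\not\equiv\top$; $(K\dotminus 6)$ is immediate; $(K\dotminus 5)$, $(K\dotminus 7)$ and $(K\dotminus 8)$ reduce to routine manipulations of minima under unions and subsets, mirroring the analogous arguments for revision.

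The main obstacle is the bookkeeping in the boundary cases: an inconsistent $K$ (where $[K]=\varnothing$ and faithfulness imposes no constraint on minima), a tautological $\varphi$ (where $[\neg\varphi]=\varnothing$ and both sides of (C) collapse to $[K]$, consistent with Remark~\ref{rem_tautology}), and a $\varphi\notin K$ (needed for $(K\dotminus 3)$). Each of these requires a separate one-line check, and the success postulate $(K\dotminus 4)$ in particular must be reconciled with the $[K]$-term on the right of (C): one has to argue that when $\varphi\in K$ and $\varphi\not\equiv\top$, no world in $[K]$ satisfies $\neg\varphi$, so the added $[K]$ does not contaminate $[K\dotminus\varphi]$ with $\varphi$-worlds beyond what Harper allows. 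Once these corner cases are handled, the remainder of the proof is the mechanical transfer through Levi and Harper described above.
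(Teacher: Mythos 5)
Your plan is correct and follows essentially the same route the paper indicates: the paper does not prove this theorem itself but cites it from Caridroit et al., noting exactly that it is obtained by combining condition (R) of Theorem~\ref{thm_represent_revision} with the Levi and Harper Identities, which is precisely your reduction (including the key observation $[K\cap K'] = [K]\cup[K']$). Your handling of the boundary cases (inconsistent $K$, tautological $\varphi$, $\varphi\notin K$) is sound and adds nothing that conflicts with the cited argument.
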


Condition (C) indicates that the contracted belief set $K\dotminus\varphi$ is specified in terms of the set-theoretic union of the $K$-worlds and the most plausible $\neg\varphi$-worlds relative to $K$.

For an AGM revision function $\ast$ and an AGM contraction function $\dotminus$, we shall sometimes write $\preceq^{\ast}$ and $\preceq^{\dotminus}$ to denote faithful preorders over $\mathbb{M}$ corresponding to $\ast$ and $\dotminus$, via conditions (R) and (C), respectively. These superscripts will be omitted whenever the context makes the intended operator clear.

\subsection{Full-Meet Belief Change}
\label{subsection_full_meet}

A particular form of AGM-style belief change is known as {\em full-meet} revision and contraction, which was originally introduced in the foundational work by Alchourr\'{o}n, G\"{a}rdenfors and Makinson \cite{agm85}.\footnote{The definition of full-meet belief change adopted herein departs from that of \cite{agm85}, but it is nonetheless equivalent.}

\begin{definition}[Full-Meet Belief Change] \label{def_full_meet}
An AGM revision (resp., contraction) function $\ast$ (resp., $\dotminus$) is the full-meet AGM revision (resp., contraction) function iff, for any theory $K$, it is specified ---via (R) (resp., (C))--- through a total preorder $\preceq_K$ over $\mathbb{M}$, such that all worlds $r,r'\in\big(\mathbb{M}\setminus[K]\big)$ are equally plausible; that is, $r \approx_K r'$.
\end{definition}

The next observation follows immediately from Definition~\ref{def_full_meet}.

\begin{remark}
Let $\ast$ be the full-meet AGM revision function, and let $\dotminus$ be the full-meet AGM contraction function. Moreover, let $K$ be a theory, and let $\varphi$ be a sentence of $\mathbb{L}$. If $\neg\varphi\in K$, then $K\ast\varphi = Cn(\varphi)$. If $\varphi\in K$, then $K\dotminus\varphi = K\cap Cn(\neg\varphi)$.
\end{remark}

The above remark highlights that, although full-meet revision perfectly respects postulates \mbox{$(K\ast1)$--$(K\ast8)$}, it exhibits a radical behaviour by entirely discarding {\em all} prior beliefs when they conflict with new evidence. For this reason, Rott refers to this type of revision as {\em amnesic} \cite{rott00}. Similarly, full-meet contraction conforms to postulates \mbox{$(K\dotminus1)$--$(K\dotminus8)$} and has been defended by Hansson \cite{hansson06}, but it always produces the {\em smallest} theory ---removing too much from the initial belief set--- compared to other AGM-style contractions. As will be discussed in Section~\ref{section_iteration} (Proposition~\ref{prop_full_meet_dp}), full-meet belief change also performs poorly with respect to iterated belief revision \cite{darwiche97}.

\subsection{Dalal's Revision Operator}
\label{subsection_dalal_operator}

A more fine-grained alternative to the full-meet approach to belief change is Dalal's method \cite{dalal88}. Given a belief set $K$, Dalal assigns plausibility to possible worlds relative to $K$ using a total preorder $\preceq_K$ over $\mathbb{M}$, which is faithful to $K$ and defined according to a {\em Hamming-distance} metric between worlds. In the special case where $K$ is inconsistent (i.e., $[K] = \varnothing$), the revised belief set $K \ast \varphi$ is simply defined as the logical closure of $\varphi$, that is, $Cn(\varphi)$. In the more typical case where $K$ is consistent (i.e., $[K] \neq \varnothing$), Dalal introduces the following formal definitions.

\noindent \begin{definition}[Difference between Worlds] \label{def_diff}
The difference \mbox{Diff $(r,r')$} between two worlds $r$, $r'$ of $\mathbb{M}$ is the set of propositional variables that have different truth values in the two worlds. That is, 

\begin{center}
Diff $(r,r') = \Big(\big(r \setminus r'\big) \cup \big(r' \setminus r\big)\Big) \cap \mathcal{P}$.
\end{center}
\end{definition}

\noindent \begin{definition}[Distance between Belief Sets and Worlds, \cite{dalal88}] \label{def_dist}
The distance D $(K,r)$ between a consistent belief set $K$ and a world $r$ is the cardinality-minimum difference between $r$ and any $K$-world. That is,

\begin{center}
D $(K,r) = \min\bigg(\Big\{ \big|$Diff $(w,r)\big| : w\in[K] \Big\}\bigg)$.
\end{center}
\end{definition}

\noindent \begin{definition}[Dalal's Preorder, \cite{dalal88}] \label{def_dalal_preorder}
Let $K$ be a consistent belief set. Dalal's preorder $\sqsubseteq_{K}$ over $\mathbb{M}$ is the preorder uniquely specified such that, for any \mbox{$r,r'\in\mathbb{M}$},

\begin{center}
\begin{tabular}{l l l}
$r \sqsubseteq_{K} r'$ & iff & D $(K,r)$ $\leqslant$ D $(K,r')$.
\end{tabular}
\end{center}
\end{definition}

\noindent \begin{definition}[Dalal's Revision Operator, \cite{dalal88}] \label{def_dalal}
Dalal's revision operator $\star$ is the revision function induced, via condition (R), by the family of Dalal's preorders $\{\sqsubseteq_{K}\}_{\forall\, K}$ over $\mathbb{M}$.
\end{definition}

Katsuno and Mendelzon pointed out that, for every (consistent) belief set $K$, Dalal's preorder $\sqsubseteq_{K}$ is total and faithful to $K$ \cite[p. 269]{katsuno91}. Consequently, in view of Theorem~\ref{thm_represent_revision}, Dalal's revision operator satisfies postulates \mbox{$(K\ast1)$--$(K\ast8)$}, and thus qualifies as an AGM revision function.

\section{Belief Change as a Gradual Transition of Beliefs}
\label{section_change_gradual}

As previously outlined, the AGM framework provides a set of rationality postulates governing how an agent should transition from an initial belief set $K_1$ to a revised or contracted belief set $K_2$. However, it remains silent on the inner workings of the belief-change process itself --- it focuses {\em solely} on the initial and final states of belief, leaving the intermediate steps unaccounted for.

Motivated by this limitation, and acknowledging that humans typically resist sudden belief alterations due to cognitive and psychological constraints \cite{lao19,kahneman13,schwitzgebel99,vosniadou94}, our previous work \cite{aravanis25a} formally investigated the internal structure of the $K_1$-to-$K_2$ transition. Specifically, we proposed a model in which belief change unfolds as a {\em gradual}, stepwise process, represented by a sequence of intermediate belief sets $H_1,\ldots,H_n$. These sets capture the agent's evolving cognitive state during the transition from $K_1$ to $K_2$, and we denote this progression compactly by \mbox{$\langle K_1, K_2 \rangle = (H_1, \ldots, H_n)$}.

The following belief-revision scenario illustrates the issue.

\begin{example}[Stepwise Revision of Scientific Hypotheses]
Dr. Athena is a climate scientist studying the relationship between solar activity and regional temperature changes. Initially, based on earlier studies, she firmly believes that neither solar flares nor sunspot cycles influence regional climate. Let $a$ denote the proposition ``solar flares do not affect climate'', and $b$ the proposition ``sunspot cycles do not affect climate''. Therefore, her initial belief set is $K_1 = Cn(a \wedge b)$.

Later, Athena receives new empirical data from a long-term satellite observation program, which suggests that both solar flares and sunspot cycles do, in fact, have a measurable effect on temperature patterns. This new information is expressed by the sentence $\varphi = \neg a \wedge \neg b$. Rather than immediately revising her beliefs to adopt $K_2 = Cn(\neg a \wedge \neg b)$, Athena proceeds cautiously. Given the complexity of the phenomena and her commitment to scientific rigour, she considers and evaluates the new claims separately. Her belief revision unfolds through the following intermediate belief set:

\begin{itemize}
\item First, she entertains the possibility that only sunspot cycles are influential, reaching the intermediate belief set $H_1 = Cn(a \wedge \neg b)$.

\item After conducting additional analyses, she begins to suspect that solar flares may also play a role, leading her to adopt the final revised belief set $K_2 = Cn(\neg a \wedge \neg b)$.
\end{itemize}

This progression ---from complete scepticism to full acceptance--- models a gradual epistemic shift driven by evidence and critical analysis. It underscores the idea that belief change, particularly in scientific inquiry, may occur through cautious intermediate steps, rather than through abrupt, wholesale revision.
\end{example}

In \cite{aravanis25a}, we emphasized that, in a formal setting, the sequence of intermediate belief sets $H_1,\ldots,H_n$ may be determined by several principled criteria. For instance, it may be guided by the structure of the faithful preorder $\preceq_{K_1}$ associated with the agent's initial beliefs, or alternatively, by a suitable {\em distance metric} that quantifies (dis)similarity between belief sets. In the present section, we revisit the distance-based approach introduced in \cite{aravanis25a} and extend these metrics (Subsection~\ref{subsection_distance_sets}), and show that Dalal's method for belief change can naturally induce a gradual, well-structured progression through intermediate belief sets (Subsection~\ref{subsection_transition_dalal}).

\subsection{Distance between Belief Sets}
\label{subsection_distance_sets}

A {\em distance metric} between two belief sets $K_1$ and $K_2$ serves to formally capture the degree of difference between the bodies of knowledge they encode. In \cite{aravanis25a}, we introduced two intuitive notions of such distance, referred to as {\em type-A} and {\em type-B} distances, each offering a distinct perspective on how belief sets may diverge. These are defined below.

\begin{definition}[Type-A Distance, \cite{aravanis25a}] \label{def_dist_a}
Let $K_1$, $K_2$ be two consistent belief sets. The type-A distance \mbox{Dist$_{A}(K_1,K_2)$} between $K_1$ and $K_2$ is the cardinality-minimum difference between the $K_1$-worlds and the $K_2$-worlds. That is,

\begin{center}
Dist$_{A}\big(K_1,K_2\big) = \min\bigg(\Big\{ \big|$Diff $(w,w')\big| : w\in[K_1]\ $ and $\ w'\in[K_2] \Big\}\bigg)$.
\end{center}
\end{definition}

\begin{definition}[Type-B Distance, \cite{aravanis25a}] \label{def_dist_b}
Let $K_1$, $K_2$ be two belief sets. The type-B distance \mbox{Dist$_{B}(K_1,K_2)$} between $K_1$ and $K_2$ is the symmetric difference between $[K_1]$ and $[K_2]$. That is,

\begin{center}
Dist$_{B}\big(K_1,K_2\big) = \Big([K_1] \setminus [K_2]\Big) \cup \Big([K_2] \setminus [K_1]\Big)$.
\end{center}
\end{definition}

Observe that type-A distance is a Hamming-based metric representing a non-negative integer, whereas type-B distance corresponds to a set of possible worlds. Example~\ref{ex_distances} concretely illustrates the usage of both type-A and type-B distances.

\begin{example}[Distances between Belief Sets] \label{ex_distances}
Let $\mathcal{P} = \{a,b\}$, and let $K_1 = Cn(\neg a\wedge b)$ and \mbox{$K_2 = Cn(a\leftrightarrow b)$} be two belief sets. Obviously then, \mbox{$[K_1] = \{ \overline{a}b \}$} and $[K_2] = \{ab,\overline{a}\overline{b}\}$. In view of Definitions~\ref{def_dist_a} and~\ref{def_dist_b}, we conclude that Dist$_{A}\big(K_1,K_2\big) = 1$ and \mbox{Dist$_{B}\big(K_1,K_2\big) = \big\{ ab,\overline{a}b,\overline{a}\overline{b}\big\}$}, respectively.
\end{example}

Having formally specified the notion of distance between belief sets, let us consider the sequence $\big(T_1, \ldots, T_l\big) = \big(K_1, H_1, \ldots, H_n, K_2\big)$, representing the full trajectory of belief change from the initial set $K_1$ to the final set $K_2$ through intermediate states $H_1, \ldots, H_n$. Within this sequence, certain natural and meaningful relationships among belief sets can be articulated via the following conditions, for any indices $i,j,m\in\{1,\ldots,l\}$ such that $i < j < m$:

\begin{center}
{\renewcommand{\arraystretch}{2}
\begin{tabular}{l l}
{\bf (DA)} & {\em Dist}$_A\big(T_i,T_m\big) \geqslant$ {\em Dist}$_A\big(T_j,T_m\big)$. \\

{\bf (DB)} & $\big|${\em Dist}$_B\big(T_i,T_m\big)\big| \geqslant\big|${\em Dist}$_B\big(T_j,T_m\big)\big|$. \\

{\bf (SD)} & {\em Dist}$_B\big(T_i,T_m\big) \supseteq$ {\em Dist}$_B\big(T_j,T_m\big)$.
\end{tabular}}
\end{center} 

Conditions (DA) and (DB), originally proposed in \cite{aravanis25a}, are both based on Hamming-type distance metrics, which quantify the number of differing entries between belief sets. In contrast, condition (SD), introduced herein for the first time, constitutes an {\em inclusion-based} refinement of (DB). Rather than measuring only the cardinality of type-B distances, it posits a stronger structural constraint, according to which the type-B distance between $T_i$ and $T_m$ must fully contain the type-B distance between $T_j$ and $T_m$. As such, (SD) captures a more fine-grained and hierarchical view of belief progression, reflecting a deepening convergence toward the final state of belief. Importantly, (SD) will later prove particularly useful in our discussion of ANN dynamics.

\subsection{Gradual Transition of Beliefs via Dalal's Method}
\label{subsection_transition_dalal}

In this subsection, we demonstrate that the belief-change method proposed by Dalal \cite{dalal88} can naturally support gradual and well-ordered transitions through intermediate belief sets in suitable cases.

Dalal's method orders possible worlds according to their Hamming distance from the initial belief set. This ordering naturally stratifies worlds into distance layers around the agent's current beliefs. Such stratification can be used to construct gradual belief transitions, where the agent moves from worlds closest to the initial belief set toward worlds lying at greater Dalal distance. However, this outward movement from the initial belief set does not, in general, guarantee monotonic convergence toward an independently specified final belief set. A monotonic approach toward the final belief set requires additional structural alignment between the Dalal layers around the initial belief set and the distance layers around the final belief set. The following example illustrates a case in which this alignment is present.

\begin{example}(Belief Transition via Dalal) \label{ex_gradual_dalal}
Let $\mathcal{P} = \{a,b,c,d\}$ and assume that \mbox{$K = Cn\big(\neg a\wedge\neg b\wedge\neg c\wedge\neg d\big)$} is the initial belief set of the agent; thus, $[K] = \{\overline{a}\overline{b}\overline{c}\overline{d}\}$. Suppose that the agent modifies her beliefs using Dalal's revision operator $\star$, which assigns (via (R)) at $K$ the (uniquely defined) Dalal's preorder $\sqsubseteq_{K}$ over $\mathbb{M}$:\footnote{$\sqsubset_{K}$ denotes the strict part of $\sqsubseteq_{K}$.} 

\begin{center}
\begin{tabular}{M{1cm} M{1cm} M{1cm} M{1cm} M{1cm} M{1cm} M{1cm} M{1cm} M{1cm}}
$\overline{a}\overline{b}\overline{c}\overline{d}$ 
& $\sqsubset_{K}$ 
& $\overline{a}\overline{b}\overline{c}d$ $\overline{a}\overline{b}c\overline{d}$ $\overline{a}b\overline{c}\overline{d}$ $a\overline{b}\overline{c}\overline{d}$
& $\sqsubset_{K}$ 
& $\overline{a}\overline{b}cd$ $\overline{a}b\overline{c}d$ $\overline{a}bc\overline{d}$ $a\overline{b}\overline{c}d$ $a\overline{b}c\overline{d}$ $ab\overline{c}\overline{d}$
& $\sqsubset_{K}$ 
& $\overline{a}bcd$ $a\overline{b}cd$ $ab\overline{c}d$ $abc\overline{d}$ 
& $\sqsubset_{K}$ 
& $abcd$
\end{tabular}
\end{center}

\noindent Now, let $\varphi = a\wedge b\wedge c\wedge d$ be an epistemic input. Then, the $\star$-revision of $K$ by $\varphi$ produces a belief set $K\star\varphi$, such that $[K\star\varphi] = \min([\varphi],\sqsubseteq_K) = \{ abcd \}$; thus, $K\star\varphi = Cn\big(a\wedge b\wedge c\wedge d\big)$. On that basis, a plausible $\sqsubseteq_{K}$-generated sequence of intermediate belief sets to which the agent adheres during the transition from $K$ to $K\star\varphi$ would be 

\begin{center}
$\big\langle K,K\star\varphi\big\rangle = \big(H_1,H_2,H_3\big)$,
\end{center}

\noindent where $[H_1] = \Big\{ \overline{a}\overline{b}\overline{c}d , \overline{a}\overline{b}c\overline{d} , \overline{a}b\overline{c}\overline{d} , a\overline{b}\overline{c}\overline{d} \Big\}$, $[H_2] = \Big\{ \overline{a}\overline{b}cd , \overline{a}b\overline{c}d , \overline{a}bc\overline{d} , a\overline{b}\overline{c}d , a\overline{b}c\overline{d} , ab\overline{c}\overline{d} \Big\}$, and $[H_3] = \Big\{ \overline{a}bcd , a\overline{b}cd , ab\overline{c}d , abc\overline{d} \Big\}$. Observe that, for any $r\in[K]$, any $r_1\in[H_1]$, any $r_2\in[H_2]$, any $r_3\in[H_3]$, and any $r''\in[K\star\varphi]$, it holds that

\begin{center}
$r\quad \sqsubseteq_K\quad r_1\quad \sqsubseteq_K\quad r_2\quad \sqsubseteq_K\quad r_3\quad \sqsubseteq_K\quad r''$.
\end{center}

Furthermore, letting $\big(T_1,T_2,T_3,T_4,T_5\big) = \big(K,H_1,H_2,H_3,K\star\varphi\big)$, it is true that, for any $i,j,m\in\{1,2,3,4,5\}$ such that $i < j < m$, Dist$_{A}\big(T_i,T_m\big) >$ Dist$_{A}\big(T_j,T_m\big)$. Hence, the agent's belief evolution satisfies condition (DA). This implies that the intermediate belief sets $H_1$, $H_2$, and $H_3$ form genuine in-between states between $K$ and $K\star\varphi$, progressively reducing the type-A distance to the final belief set.
\end{example}

We conclude this section by highlighting that the connections between the notion of gradual beliefs and prominent belief-change frameworks ---such as improvement operators \cite{konieczny08,konieczny10}--- have been explicitly discussed in \cite[Subsection~4.3]{aravanis25a}.

\section{Iterated Belief Change: The DP Approach}
\label{section_iteration}

The AGM framework, as originally introduced by Alchourr\'{o}n, G\"{a}rdenfors and Makinson in \cite{agm85} and presented in Section~\ref{section_agm}, examines only {\em one-step} changes (revisions and/or contractions) and lacks guidelines for {\em iterated} belief change. Perhaps the most influential work addressing the problem of multi-step revision is the approach of Darwiche and Pearl (DP) \cite{darwiche97}. To regulate the process of iterated belief revision, Darwiche and Pearl proposed the subsequent postulates (DP1)--(DP4), where $\ast$ denotes an arbitrary AGM revision function.

\vspace{3mm}

{\renewcommand{\arraystretch}{1.5}
\noindent \begin{tabular}{l l}
{\bf (DP1)} & If $\varphi\models\psi$, then $(K\ast\psi)\ast\varphi = K\ast\varphi$. \\

{\bf (DP2)} & If $\varphi\models\neg\psi$, then $(K\ast\psi)\ast\varphi = K\ast\varphi$. \\
 
{\bf (DP3)} & If $\psi \in K\ast\varphi$, then $\psi \in (K\ast\psi)\ast\varphi$. \\

{\bf (DP4)} & If $\neg\psi \not\in K\ast\varphi$, then $\neg\psi \not\in (K\ast\psi)\ast\varphi$.
\end{tabular}}

\vspace{3mm}

We refer to any AGM revision function that satisfies postulates (DP1)--(DP4) as a {\em DP revision function}. A discussion on postulates (DP1)--(DP4) can be found in the survey of Peppas \cite[Section~7]{peppas14}. Darwiche and Pearl proved that the following constraints (R1)--(R4) correspond to the possible-worlds characterization of postulates (DP1)--(DP4), respectively \cite{darwiche97}.

\vspace{3mm}

{\renewcommand{\arraystretch}{1.5}
\noindent \begin{tabular}{l l}
{\bf (R1)} & If $r, r' \in [\varphi]$, then $r \preceq_{K} r'$ iff $r \preceq_{K\ast\varphi} r'$. \\

{\bf (R2)} & If $r, r' \in [\neg\varphi]$, then $r \preceq_{K} r'$ iff $r \preceq_{K\ast\varphi} r'$. \\
 
{\bf (R3)} & If $r \in [\varphi]$ and $r' \in [\neg\varphi]$, then $r \prec_{K} r'$ implies $r \prec_{K\ast\varphi} r'$. \\

{\bf (R4)} & If $r \in [\varphi]$ and $r' \in [\neg\varphi]$, then $r \preceq_{K} r'$ implies $r \preceq_{K\ast\varphi} r'$.
\end{tabular}}

\vspace{3mm}

In a subsequent work \cite{konieczny17}, Konieczny and Pino P\'{e}rez proposed the following postulates \mbox{(DPC1)--(DPC4)} as counterparts to (DP1)--(DP4) for contraction, where $\dotminus$ is an arbitrary AGM contraction function.

\vspace{3mm}

{\renewcommand{\arraystretch}{1.5}
\noindent \begin{tabular}{l l}
{\bf (DPC1)} & If $\neg\varphi\models\chi$, then $K\dotminus (\varphi \vee \psi) \models K \dotminus \varphi$ iff $(K \dotminus\chi)\dotminus(\varphi \vee \psi) \models (K\dotminus \chi)\dotminus\varphi$. \\

{\bf (DPC2)} & If $\chi\models\varphi$, then $K\dotminus (\varphi \vee \psi) \models K \dotminus \varphi$ iff $(K \dotminus\chi)\dotminus(\varphi \vee \psi) \models (K\dotminus \chi)\dotminus\varphi$.  \\
 
{\bf (DPC3)} & If $\neg\psi\models\chi$, then $(K\dotminus\chi) \dotminus(\varphi\vee\psi) \models (K\dotminus\chi)\dotminus \varphi$ implies $K\dotminus(\varphi \vee \psi)\models K \dotminus\varphi$. \\

{\bf (DPC4)} & If $\chi\models\psi$, then $(K\dotminus\chi) \dotminus(\varphi\vee\psi) \models (K\dotminus\chi)\dotminus \varphi$ implies $K\dotminus(\varphi \vee \psi)\models K \dotminus\varphi$.
\end{tabular}}

\vspace{3mm}

We refer to any AGM contraction function that satisfies postulates (DPC1)--(DPC4) as a {\em DP contraction function}. The possible-worlds characterization of postulates (DPC1)--(DPC4) corresponds to the following constraints (C1)--(C4), respectively.\footnote{Chopra, Ghose, Meyer and Wong also provided an axiomatic characterization of the semantic constraints (C1)--(C4) in terms of postulates that combine revision and contraction operators \cite{chopra08}.}

\vspace{3mm}

{\renewcommand{\arraystretch}{1.5}
\noindent \begin{tabular}{l l}
{\bf (C1)} & If $r, r' \in [\varphi]$, then $r \preceq_{K} r'$ iff $r \preceq_{K\dotminus\varphi} r'$. \\

{\bf (C2)} & If $r, r' \in [\neg\varphi]$, then $r \preceq_{K} r'$ iff $r \preceq_{K\dotminus\varphi} r'$. \\
 
{\bf (C3)} & If $r \in [\neg\varphi]$ and $r' \in [\varphi]$, then $r \prec_{K} r'$ implies $r \prec_{K\dotminus\varphi} r'$. \\

{\bf (C4)} & If $r \in [\neg\varphi]$ and $r' \in [\varphi]$, then $r \preceq_{K} r'$ implies $r \preceq_{K\dotminus\varphi} r'$.
\end{tabular}}

\vspace{3mm}
 
It should be stressed that the symbol $K$ in postulates (DP1)--(DP4) and (DPC1)--(DPC4) denotes a special structure called {\em epistemic state}, rather than a mere belief set. An epistemic state is a richer construct, encompassing not only the associated belief set, but also additional information, such as an ordering over possible worlds encoding their relative plausibility. Although this is an important distinction ---discussed in detail by Darwiche and Pearl \cite{darwiche97} and later by Schwind {\em et al.} \cite{Schwind22}--- it will not affect our exposition, since we shall only work with the possible-worlds characterization of these postulates, namely with conditions (R1)--(R4) and (C1)--(C4).

The following proposition showcases that full-meet belief change is {\em incompatible} with the DP approach.

\begin{proposition} \label{prop_full_meet_dp}
Let $\ast$ be the full-meet AGM revision function. Then, $\ast$ violates the conjunction of postulates (DP1)--(DP4).
\end{proposition}

\begin{proof}
Let $\mathcal{P} = \{a,b\}$ and let $K = Cn(a,b)$ be a theory (i.e., $[K] = \{ab\}$). Since $\ast$ implements full-meet revision, $\ast$ assigns at $K$ (via (R)) a faithful preorder $\preceq_K$ over $\mathbb{M}$, such that 

\begin{center}
$\underbrace{ab}_{[K]} \quad \prec_K\quad a\overline{b}\quad \approx_K\quad \overline{a}b\quad \approx_K\quad \overline{a}\overline{b}$. 
\end{center}

Now, let $\varphi = \neg a\wedge b$ be a sentence of $\mathbb{L}$ (i.e., $[\varphi] = \{\overline{a}b\}$). Then, condition (R) entails that $[K\ast\varphi] = \min([\varphi],\preceq_K) = \{\overline{a}b\}$. Since $\ast$ implements full-meet revision, $\ast$ assigns at $K\ast\varphi$ (via (R)) a faithful preorder $\preceq_{K\ast\varphi}$ over $\mathbb{M}$, such that 

\begin{center}
$\underbrace{\overline{a}b}_{[K\ast\varphi]} \quad \prec_{K\ast\varphi}\quad ab\quad \approx_{K\ast\varphi}\quad a\overline{b}\quad \approx_{K\ast\varphi}\quad \overline{a}\overline{b}$.
\end{center}

\noindent However, through the aforementioned revision process, the transition from the prior preorder $\preceq_{K}$ to the posterior preorder $\preceq_{K\ast\varphi}$ violates postulate (R2), which, given that $ab \prec_{K} a\overline{b}$, demands that $ab \prec_{K\ast\varphi} a\overline{b}$ as well (yet, $ab \approx_{K\ast\varphi} a\overline{b}$). Consequently, $\ast$ violates postulate (DP2), and thus, it violates the conjunction of postulates \mbox{(DP1)--(DP4)}.
\end{proof}

Having outlined the revision and contraction aspects of Darwiche and Pearl's approach, in the remainder of this section we introduce two concrete DP-compliant operations, namely, lexicographic revision and moderate contraction.

\subsection{Lexicographic Revision}
\label{subsection_lex_revision}

Lexicographic revision was introduced by Nayak {\em et al.} as an approach to revision that assigns higher priority to beliefs compatible with the new information over those that are not \cite{nayak94,nayak03}.

\begin{definition}[Lexicographic AGM Revision Function, \cite{nayak94,nayak03}] \label{def_lex}
Let $\ast$ be an AGM revision function, and let $\{\preceq_{K}\}_{\forall\, K}$ be the family of faithful preorders over $\mathbb{M}$, that corresponds to $\ast$ via (R). The revision function $\ast$ is a lexicographic AGM revision function iff $\{\preceq_{K}\}_{\forall\, K}$ satisfies conditions (R1) and (R2), along with the following strict strengthening of (R3) and (R4).

\begin{center}
\begin{tabular}{l l}
{\bf (LR)} & If $r' \in [\neg\varphi]$ and $r \in [\varphi]$, then $r \prec_{K\ast\varphi} r'$.
\end{tabular}
\end{center}
\end{definition}

According to Definition~\ref{def_lex}, lexicographic revision makes every $\varphi$-world strictly more plausible than every $\neg\varphi$-world, while preserving the ordering within each of $[\neg\varphi]$ and $[\varphi]$. Note that the three semantic constraints (R1), (R2) and (LR) are sufficient to {\em uniquely} determine the posterior total preorder $\preceq_{K\ast\varphi}$, which is faithful to the revised belief set $K\ast\varphi$.

\subsection{Moderate Contraction}
\label{subsection_mod_contraction}

Ramachandran {\em et al.} proposed a contraction strategy that can be viewed as an analogue of lexicographic revision \cite{ramachandran12}. This type of contraction, known as {\em moderate contraction} (also referred to as priority contraction), gives precedence to the $\neg\varphi$-worlds over the $\varphi$-worlds, when contracting by $\varphi$.

\begin{definition}[Moderate AGM Contraction Function, \cite{ramachandran12}] \label{def_moderate}
Let $\dotminus$ be an AGM contraction function, and let $\{\preceq_{K}\}_{\forall\, K}$ be the family of faithful preorders over $\mathbb{M}$, that corresponds to $\dotminus$ via (C). The contraction function $\dotminus$ is a moderate AGM contraction function iff $\{\preceq_{K}\}_{\forall\, K}$ satisfies conditions (C1) and (C2), along with the following strict strengthening of (C3) and (C4).

\begin{center}
\begin{tabular}{l l}
{\bf (MC)} & If $r \in [\neg\varphi]$, $r' \in [\varphi]$ and $r'\notin[K\dotminus\varphi]$, then $r \prec_{K\dotminus\varphi} r'$.
\end{tabular}
\end{center}

\noindent In the limiting case where the epistemic input $\varphi\equiv\top$ (i.e., $[\neg\varphi] = \varnothing$), it follows that $\preceq_{K\dotminus\varphi}\ =\ \preceq_K$.
\end{definition}

According to Definition~\ref{def_moderate}, a moderate contraction ensures that all $\varphi$-worlds, except for the most $K$-plausible ones, are ordered after the $\neg\varphi$-worlds. Note that the three semantic constraints (C1), (C2) and (MC) are sufficient to {\em uniquely} determine the posterior total preorder $\preceq_{K\dotminus\varphi}$, which is faithful to the contracted belief set $K\dotminus\varphi$.

The following observation follows immediately from Definitions~\ref{def_lex} and~\ref{def_moderate}.

\begin{remark} \label{rem_subclass_dp}
Any lexicographic AGM revision function satisfies postulates (DP1)--(DP4), and any moderate AGM contraction function satisfies postulates (DPC1)--(DPC4).
\end{remark}

Figure~\ref{fig_dp} provides a graphical representation of both lexicographic revision and moderate contraction. 

\begin{figure}[h!]
\centering
\begin{subfigure}{0.45\textwidth}
\centering
\begin{tikzpicture}
\node[] at (1.5,9.5) {$[\varphi]$};
\node[] at (0.5,9.5) {$[\neg\varphi]$};
\def\w{1}
\def\h{1}

\foreach \i in {0,...,3} {
    \draw[pattern={Lines[angle=45,distance=3pt,line width=0.5pt]}, pattern color=red!50, thick] (0,{(\i+5)*\h}) rectangle ++(\w,\h); }

\foreach \i in {0,...,3} {
    \draw[pattern={Lines[angle=-45,distance=3pt,line width=0.5pt]}, pattern color=green!50, thick] (\w,{(\i+1)*\h}) rectangle ++(\w,\h); }
\end{tikzpicture}
\caption{Lexicographic revision}
\end{subfigure}
\hfill
\begin{subfigure}{0.45\textwidth}
\centering
\begin{tikzpicture}
\node[] at (1.5,8.5) {$[\varphi]$};
\node[] at (0.5,8.5) {$[\neg\varphi]$};
\def\w{1}
\def\h{1}

\foreach \i in {0,...,3} {
    \draw[pattern={Lines[angle=45,distance=3pt,line width=0.5pt]}, pattern color=red!50, thick] (0,{(\i+1)*\h}) rectangle ++(\w,\h); }

\draw[pattern={Lines[angle=-45,distance=3pt,line width=0.5pt]}, pattern color=green!50, thick] (\w,\h) rectangle ++(\w,\h);

\foreach \i in {0,...,2} {
    \draw[pattern={Lines[angle=-45,distance=3pt,line width=0.5pt]}, pattern color=green!50, thick] (\w,{(\i+5)*\h}) rectangle ++(\w,\h); }
\end{tikzpicture}
\caption{Moderate contraction}
\end{subfigure}
\caption{Graphical representation of lexicographic revision (a) and moderate contraction (b). A state of belief is modified (revised or contracted) by a sentence $\varphi$. $\varphi$-worlds are represented by green rectangles, while $\neg\varphi$-worlds are represented by red rectangles. The lower a rectangle is, the more plausible the represented worlds are relative to the state of belief --- thus, the lowest rectangles represent the worlds of the modified belief set.}
\label{fig_dp}
\end{figure}
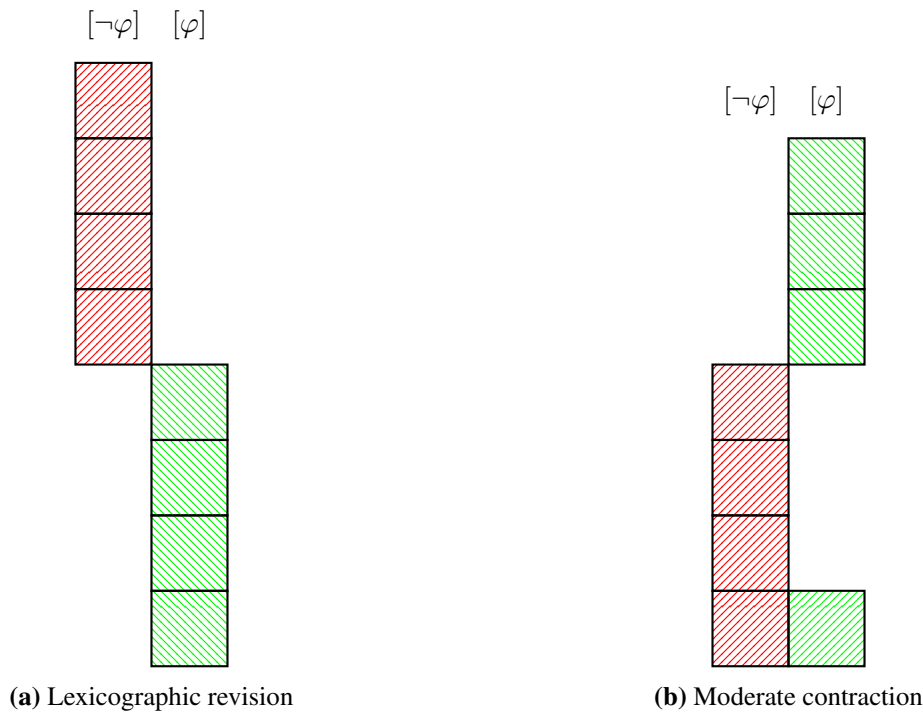

\section{Binary Artificial Neural Networks}
\label{section_binary_ann}

Building on the preceding discussion of belief change, we now investigate its relationship with machine learning. We begin by introducing a special class of Artificial Neural Networks (ANNs), termed {\em binary} ANNs, as proposed in \cite{aravanis25a}. A basic familiarity with the core principles of ANNs is assumed --- readers seeking further background may consult standard textbooks such as \cite{goodfellow16,bishop96,haykin94}.

\subsection{Architecture}
\label{subsection_architecture}

A binary ANN is a feed-forward ANN whose inputs $X_1,\dots,X_n$ take {\em binary values} (either $0$ or $1$).\footnote{Since any input data can be transformed into a binary representation, the binary-input assumption in binary ANNs does not impose a limitation on the type of input data.} Each neuron $i\in\{1,\ldots,m\}$ in the output layer of a binary ANN produces a real-valued output \mbox{$y_i\in[0,1]$} (e.g., via a sigmoid or softmax activation function). Given a real-valued threshold $\tau_i\in[0,1]$, each output $y_i$ of the network identifies a {\em binary output} $Y_i$, such that: 

\begin{center}
$Y_i = 
\left\{ \begin{array}{ll} 1 & \hspace{3mm} \mbox{if} \hspace{4mm} \mbox{$y_i \geqslant \tau_i$}
\\ \\ 
0 & \hspace{3mm} \mbox{if} \hspace{4mm} \mbox{$y_i < \tau_i$} 
\end{array}\right.$
\end{center}

Hence, a binary ANN maps binary inputs $X_1,\dots,X_n$ to binary outputs $Y_1,\dots,Y_m$, while imposing {\em no} constraints on the values of its weights and biases.\footnote{Binary ANNs in this sense {\em differ} from models that employ only binary parameters, such as those described in \cite{hubara16}.} The topology of a binary ANN, with a single hidden layer, is illustrated in Figure~\ref{fig_binary_ann} (cf. Figure~\ref{fig_ann_topology} of Section~\ref{section_preliminaries}). 

Evidently, the minimal assumptions and constraints that characterize binary ANNs render them suitable for a wide range of real-world applications. As illustrative examples, Subsection~\ref{subsection_examples} will demonstrate that a binary ANN can be trained to learn logical operations, as well as to recognize handwritten digits from the well-known MNIST (Modified National Institute of Standards and Technology) dataset \cite{lecun98}.

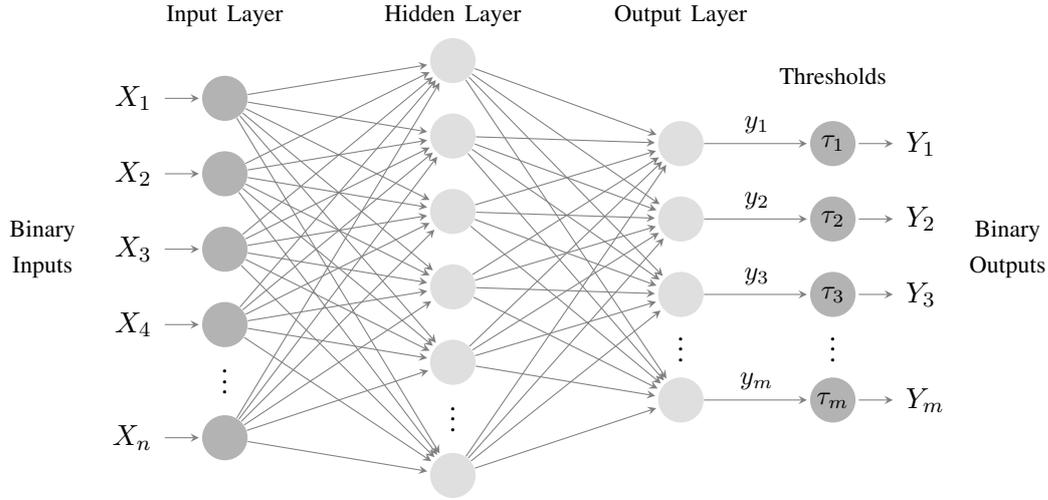
\begin{figure}[t]
\hspace*{-1.2cm}
\centering
\begin{tikzpicture}[shorten >=1pt,->,draw=black!50, node distance=\layersep, >=stealth]
    \tikzstyle{every pin edge}=[<-,shorten <=1pt]
    \tikzstyle{neuron}=[circle,fill=black!25,minimum size=17pt,inner sep=0pt]
    \tikzstyle{input neuron}=[neuron, fill=gray!60];
    \tikzstyle{output neuron}=[neuron, fill=gray!25];
    \tikzstyle{binary output}=[neuron, fill=gray!60];
    \tikzstyle{hidden neuron}=[neuron, fill=gray!25];
    \tikzstyle{annot} = [text width=7em, text centered]

    \foreach \name / \y in {1,...,4}
        \node[input neuron, pin=left: $X_\y$] (I-\name) at (0,-\y) {};
    \node[] (I-5) at (0,-4.65 cm) {$\vdots$};
    \node[input neuron, pin=left: $X_n$] (I-6) at (0,-5.5 cm) {};
    
    \foreach \name / \y in {1,...,5}
        \path[yshift=0.5cm]
            node[hidden neuron] (H-\name) at (\layersep,-\y cm) {};
    \node[] (H-6) at (\layersep,-5.15 cm) {$\vdots$};
    \node[hidden neuron] (H-7) at (\layersep,-6 cm) {};           
          
    \foreach \name / \y in {1,...,3}   
       \path[yshift=-0.6cm]    
        node[output neuron] (O-\name) at (6,-\y cm) {};    
        \node[] (O-4) at (6,-4.22 cm) {$\vdots$};
        \node[output neuron] (O-5) at (6,-5 cm) {};
     
    \foreach \name / \y in {1,...,3}   
       \path[yshift=-0.6cm]    
        node[binary output, pin={[pin edge={->}]right: $Y_\y$}] (OO-\name) at (8,-\y cm) {\small $\tau_\y$};    
        \node[] (OO-4) at (8,-4.22 cm) {$\vdots$};
        \node[binary output, pin={[pin edge={->}]right: $Y_m$}] (OO-5) at (8,-5 cm) {\small $\tau_m$};     
     
    \foreach \source in {1,...,4,6}
        \foreach \dest in {1,...,5,7}
            \path (I-\source) edge (H-\dest);
    
    \foreach \source in {1,...,5,7}
        \foreach \dest in {1,...,3,5}
            \path (H-\source) edge (O-\dest);
    \draw (O-1)--(OO-1) node[midway, above] {\small $y_1$};
    \draw (O-2)--(OO-2) node[midway, above] {\small $y_2$};
    \draw (O-3)--(OO-3) node[midway, above] {\small $y_3$};
    \draw (O-5)--(OO-5) node[midway, above] {\small $y_m$};

    \node[annot,above of=H-1, node distance=0.6cm] (hl) {\footnotesize Hidden Layer};
    \node[annot,left of=hl] {\footnotesize Input Layer};
    \node[annot,right of=hl] {\footnotesize Output Layer};
    \node[annot,above of=OO-1, node distance=0.9cm] (tr) {\footnotesize Thresholds};
    \node[annot] (in) at (-2.4,-3) {\footnotesize Binary\\\footnotesize Inputs};
    \node[annot] (out) at (10.3,-3) {\footnotesize Binary\\\footnotesize Outputs};
\end{tikzpicture}
\caption{A binary ANN with a single hidden layer. The network maps binary inputs $X_1,\dots,X_n$ to binary outputs $Y_1,\dots,Y_m$.}
\label{fig_binary_ann}
\end{figure}

The following crucial observation shows that the operation of an arbitrary binary ANN with {\em multiple} outputs can be analysed in terms of a collection of binary ANNs, each with a {\em single} output.

\begin{remark} \label{rem_single_multiple}
The input-output relationship of any binary ANN with multiple outputs can be analysed in terms of a collection of single-output binary ANNs, each corresponding to one binary output.
\end{remark}

Guided by Remark~\ref{rem_single_multiple}, and to simplify the presentation, we shall restrict our analysis to {\em single-output} binary ANNs.

First, note that a binary ANN with a single binary output $Y$ induces a {\em Boolean function} $f$, which we refer to as the Boolean function {\em of} $Y$. That is,

\begin{center}
$Y = f\big(X_1,\ldots,X_n\big)$.
\end{center}

\noindent The Boolean function $f$ of $Y$ can be represented as a propositional formula $\psi$ of $\mathbb{L}$, in the sense that $f$ and $\psi$ yield {\em identical truth tables}. To illustrate this correspondence, assume that the set of propositional variables $\mathcal{P}$ contains exactly one variable for each input of the ANN; i.e., \mbox{$|\mathcal{P}| = n$}. Moreover, let $w$ be a possible world of $\mathbb{M}$ such that, for any $p_i\in\mathcal{P}$ with $i\in\{1,\ldots,n\}$, $p_i\in w$ whenever $X_i = 1$, and $\neg p_i\in w$ whenever $X_i = 0$. Clearly then, there exists a {\em one-to-one correspondence} between the propositional variables in $\mathcal{P}$ and the binary inputs $X_1,\ldots,X_n$. Accordingly, the following equivalence holds for the binary output $Y$:

\begin{center}
\begin{tabular}{l l l}
$f\big(X_1,\ldots,X_n\big) = 1$ & iff & $w\in[\psi]$.
\end{tabular}
\end{center}

\noindent That is, the possible world $w$ satisfies the formula $\psi$ iff the Boolean function $f$ of $Y$ maps to $1$ the truth values of the propositional variables assigned in $w$. Moreover, since for the belief set \mbox{$K = Cn(\psi)$} it holds that $[K] = [\psi]$, the Boolean function $f$ of the binary output $Y$ can, equivalently, be represented by the belief set $K$.

\subsection{Training through Backpropagation}
\label{subsection_backprop}

In standard ANNs, each neuron produces an output by applying a non-linear activation function to a weighted sum of its inputs,  where the weights and biases are trainable parameters corresponding to connections between successive layers. Training involves \textit{forward propagation}, where outputs are computed sequentially from the input to the output layer using the current parameters, followed by \textit{backpropagation}, which computes the gradient of a {\em loss function} $\mathcal{L}$ ---quantifying prediction error--- with respect to each parameter via the chain rule. These gradients are used in the \textit{parameter-update} step, where weights and biases are adjusted in the direction that minimizes $\mathcal{L}$. Repeating this process over multiple epochs leads to progressive minimization of $\mathcal{L}$ and improved predictive performance \cite{rumelhart86}.

As demonstrated in \cite{aravanis25a}, the notion of intermediate states of belief within the context of belief change can also be meaningfully applied to the training process of binary ANNs. Specifically, for a single-output binary ANN with output $Y$, one can identify a sequence of belief sets $K_1, \ldots, K_n$ such that each pair $K_i$ and $K_{i+1}$ (for \mbox{$i \in \{1, \ldots, n-1\}$}) correspond to the Boolean functions representing $Y$ immediately {\em before} and {\em after} the $i$-th update of the network parameters, respectively. In this view, $K_1$ and $K_n$ denote the belief sets before and after the entire training process, while $K_2$ through $K_{n-1}$ represent the transitional belief sets throughout training. 

Under additional assumptions ensuring that successive thresholded outputs do not introduce new disagreements with the final output behaviour, it can be obtained that, for any \mbox{$i,j\in\{1,\ldots,n\}$} such that $i < j$,

\begin{center}
$\big|${\em Dist}$_B\big(K_i,K_n\big)\big| \geqslant \big|${\em Dist}$_B\big(K_j,K_n\big)\big|$.
\end{center}

This relationship reflects the structure outlined in condition (DB) of Subsection~\ref{subsection_distance_sets}, and establishes a natural type-B distance-based ordering among the belief sets $K_1, \ldots, K_n$ induced by the training process of a single-output binary ANN. Specifically, it implies that, as training progresses, the internal representation of the ANN ---in terms of its logical behaviour--- becomes {\em increasingly aligned} with the final learned configuration, as determined by the training labels (targets).

In a similar vein, one may posit an even {\em stronger} structural relationship among the belief sets $K_1,\ldots,K_n$ by invoking condition (SD) of Subsection~\ref{subsection_distance_sets}, formulated in this work. This condition asserts that, for any $i,j\in\{1,\ldots,n\}$ such that $i < j$,

\begin{center}
{\em Dist}$_B\big(K_i,K_n\big) \supseteq$ {\em Dist}$_B\big(K_j,K_n\big)$.
\end{center}

\noindent Intuitively, this means that the discrepancies (in terms of type-B distance) between earlier belief sets and the final one not only remain greater in magnitude, but also contain the discrepancies of later belief sets as subsets. Clearly, this stronger inclusion-based condition implies the inequality of cardinalities mentioned above, i.e., {\em Dist}$_B\big(K_i,K_n\big) \supseteq$ {\em Dist}$_B\big(K_j,K_n\big)$ entails \mbox{$\big|${\em Dist}$_B\big(K_i,K_n\big)\big| \geqslant \big|${\em Dist}$_B\big(K_j,K_n\big)\big|$}.

\subsection{Illustrative Examples}
\label{subsection_examples}

This aforementioned hierarchical relationship among the belief sets $K_1, \ldots, K_n$ is not merely a theoretical construct, but rather a {\em descriptive regularity} often observed in the training dynamics of practical binary ANNs. While not guaranteed by design, the structure imposed by condition (SD) tends to emerge naturally in many real-world models, often aided by modern training practices such as adaptive optimizers, learning-rate schedules, momentum, and regularization \cite{goodfellow16}. These methods implicitly encourage smoother, more directed convergence, thereby reinforcing the subset-based alignment of belief sets over time. In this subsection, we illustrate this phenomenon through two case studies involving standard binary ANNs, each demonstrating how the training trajectory consistently conforms to the progression prescribed by condition (SD).

\begin{example}[Binary ANN Training for a Logical Operation] \label{ex_ann_boolean}
A binary ANN is implemented using the \texttt{Keras} Python library, with the goal of learning the logical operation ``at least two''. The architecture of the ANN is as follows:

\begin{itemize}
\item Input Layer: $3$ neurons corresponding to the binary inputs $X_1$, $X_2$, and $X_3$.

\item Hidden Layer: One hidden layer with $100$ neurons, each equipped with a ReLU activation function, selected to balance expressive capacity and architectural simplicity.

\item Output Layer: A single neuron with a sigmoid activation function that produces a real-valued estimation $\hat{y}\in[0,1]$. The estimation $\hat{y}$ is passed through a threshold $\tau = 0.5$, resulting in a binary estimation $\hat{Y}$ produced by the binary output $Y$ of the ANN.
\end{itemize}

Since the ANN is intended to implement the logical operation ``at least two'', the corresponding Boolean function $f$ for the binary output $Y$ must satisfy the following mapping:

\begin{center}
$
\begin{aligned}
&f(0,0,0) = 0,\quad f(0,0,1) = 0,\quad f(0,1,0) = 0,\quad f(0,1,1) = 1,\\
&f(1,0,0) = 0,\quad f(1,0,1) = 1,\quad f(1,1,0) = 1,\quad f(1,1,1) = 1.
\end{aligned}
$
\end{center}

This mapping defines the training dataset for the ANN, consisting of $2^3 = 8$ samples. The co-domain of $f$ specifies the binary labels that guide the learning process. The ANN is compiled using the binary cross-entropy loss function, which is standard for binary classification tasks. All network parameters are randomly initialized prior to training.

\begin{table}[t]
\centering\scriptsize
{\renewcommand{\arraystretch}{1.5}
\begin{tabular}{|M{1.8cm}||M{2.4cm}|M{2.4cm}|M{2.4cm}|M{2.4cm}||M{0.7cm}|}
 \hline
 \rowcolor{gray!10}
 {\bf Input} & {\bf 1st Output}\, ($\hat{Y}$)
 & {\bf 2nd Output}\, ($\hat{Y}$) & {\bf 3rd Output}\, ($\hat{Y}$) & {\bf 4th Output}\, ($\hat{Y}$) & {\bf Label} \\
 \hline\hline
 $(0,0,0)$ & $1$ & $0$ & $0$ & $0$ & {\bf 0} \\
 \hline
 $(0,0,1)$ & $1$ & $1$ & $0$ & $0$ & {\bf 0} \\
 \hline
 $(0,1,0)$ & $1$ & $1$ & $1$ & $0$ & {\bf 0} \\
 \hline
 $(0,1,1)$ & $1$ & $1$ & $1$ & $1$ & {\bf 1} \\
 \hline
 $(1,0,0)$ & $1$ & $1$ & $1$ & $0$ & {\bf 0} \\
 \hline
 $(1,0,1)$ & $1$ & $1$ & $1$ & $1$ & {\bf 1} \\
 \hline
 $(1,1,0)$ & $1$ & $1$ & $1$ & $1$ & {\bf 1} \\
 \hline
 $(1,1,1)$ & $1$ & $1$ & $1$ & $1$ & {\bf 1} \\
 \hline\hline
 $[K_i]$ & $\mathbb{M}$ & $\mathbb{M} \setminus \big\{ \overline{a}\overline{b}\overline{c} \big\}$ & $\mathbb{M} \setminus \big\{ \overline{a}\overline{b}\overline{c},\overline{a}\overline{b}c \big\}$ & $\big\{ \overline{a}bc , a\overline{b}c , ab\overline{c} , abc \big\}$ & --- \\
 \hline
 {\em Dist}$_B(K_i,K_4)$ & $\big\{\overline{a}\overline{b}c,a\overline{b}\overline{c},\overline{a}b\overline{c},\overline{a}\overline{b}\overline{c} \big\}$ & $\big\{\overline{a}\overline{b}c,a\overline{b}\overline{c},\overline{a}b\overline{c} \big\}$ & $\big\{ a\overline{b}\overline{c},\overline{a}b\overline{c} \big\}$ & $\varnothing$ & --- \\
 \hline
 {\bf Accuracy} & $50\%$ & $62.5\%$ & $75\%$ & $100\%$ & --- \\
 \hline
\end{tabular}}
\caption{Successive transitions (estimations) of the binary output $Y$ produced by the ANN during training, evaluated across all $8$ possible binary input combinations. For each training stage \mbox{$i\in\{1,2,3,4\}$}, the table reports the corresponding belief set $K_i$, the type-B distance {\em Dist}$_B(K_i,K_4)$ from the final belief set $K_4$, and the classification accuracy at that stage.}
\label{table_keras}
\end{table}

On that basis, Table \ref{table_keras} presents the four successive transitions (estimations) of the binary output $Y$ produced by the ANN during training, evaluated across all $8$ possible binary input combinations. As shown, the network's output increasingly aligns with the ground-truth labels, leading to a strict monotonic increase in accuracy, which eventually reaches $100\%$. This indicates that the network successfully learns the target logical function.

To formalize the internal behaviour of the ANN during training, let us denote the propositional variables corresponding to the binary inputs $X_1$, $X_2$, $X_3$ by $a$, $b$, $c$, respectively. Let $K_1$, $K_2$, $K_3$, $K_4$ denote the belief sets representing the Boolean functions encoded by the ANN's output at successive training stages, as illustrated in Table~\ref{table_keras}. From the table, we observe that $K_1 = Cn(\varnothing)$, $K_2 = Cn( a\vee b\vee c)$, $K_3 = Cn(a\vee b)$, and $K_4 = Cn\Big((a\wedge b)\vee(a\wedge c)\vee(b\wedge c) \Big)$. Hence, the final Boolean function $f$ learned by the ANN is captured by the belief set $K_4$, or equivalently, by the propositional formula $\psi = (a\wedge b)\vee(a\wedge c)\vee(b\wedge c)$, which precisely encodes the intended logical operation ``at least two''.

Lastly, Table~\ref{table_keras} clearly demonstrates that the evolution of the belief sets $K_1$ through $K_4$ satisfies condition (SD), since Dist$_{B}\big(K_1,K_4\big) \supset\, $ Dist$_{B}\big(K_2,K_4\big)\supset\, $ Dist$_{B}\big(K_3,K_4\big)$.
\end{example}

Example~\ref{ex_mnist} concludes this section by demonstrating the training of a binary ANN on the MNIST dataset, a standard benchmark of handwritten digits, where each image is represented by grayscale pixel values \cite{lecun98}.

\begin{example}[Binary ANN Training on the MNIST Dataset] \label{ex_mnist}
A binary ANN is implemented using the \texttt{Keras} Python library, with the goal of distinguishing between handwritten digits $0$ and $1$ from the MNIST dataset. Our training dataset consists of $30$ samples. Each MNIST image is down-sampled from its original resolution of $28\times 28$ pixels to $10\times 10$, and then binarized into black-and-white pixels. As a result, each image is encoded as a sequence of $10\times 10 = 100$ binary values. Figure~\ref{fig_digits} presents a representative selection of these images along with their corresponding (binary) labels.

\begin{figure}[h!]
\centering
\includegraphics[scale=0.69]{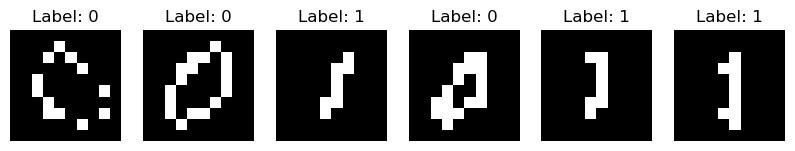}
\caption{Representative MNIST images used in the experimental setup of Example~\ref{ex_mnist}, along with their corresponding (binary) labels.}
\label{fig_digits}
\end{figure}

The architecture of the ANN is as follows:

\begin{itemize}
\item Input Layer: $100$ neurons, each corresponding to one binary pixel input, denoted $X_1,\ldots,X_{100}$.

\item Hidden Layer: A single hidden layer containing $10$ neurons, each using a ReLU activation function, chosen to balance simplicity and sufficient expressive power.

\item Output Layer: A single neuron with a sigmoid activation function, producing a real-valued estimation $\hat{y}\in[0,1]$. The expected output is $\hat{y} = 0$ for images representing the digit $0$, and $\hat{y} = 1$ for images representing the digit $1$. The prediction $\hat{y}$ is thresholded at $\tau = 0.5$, yielding the final binary estimation $\hat{Y}$ via the binary output $Y$.
\end{itemize}

The ANN is compiled using the binary cross-entropy loss function, with network parameters randomly initialized prior to training.

\begin{table}[t]
\centering\scriptsize
{\renewcommand{\arraystretch}{1.5}
\begin{tabular}{|M{1.5cm}||M{1.5cm}|M{1.5cm}|M{1.5cm}|M{1.5cm}|M{1.5cm}||M{0.7cm}|}
 \hline
 \rowcolor{gray!10}
 {\bf Input} & {\bf 1st Output}\, ($\hat{Y}$) & {\bf 2nd Output}\, ($\hat{Y}$) & {\bf 3rd Output} ($\hat{Y}$) & {\bf 4th Output}\, ($\hat{Y}$) & {\bf 5th Output}\, ($\hat{Y}$) & {\bf Label} \\
 \hline\hline
 Image~1 & $1$ & $1$ & $1$ & $1$ & $1$ & {\bf 0} \\
 \hline
 Image~2 & $0$ & $0$ & $0$ & $0$ & $0$ & {\bf 0} \\
 \hline
 Image~3 & $0$ & $0$ & $0$ & $1$ & $1$ & {\bf 1} \\
 \hline
 $\cdots$ & $\cdots$ & $\cdots$ & $\cdots$ & $\cdots$ & $\cdots$ & $\cdots$ \\
 \hline\hline
 {\bf Accuracy} & $43.33\%$ & $56.67\%$ & $70\%$ & $86.67\%$ & $96.67\%$ &--- \\
 \hline
\end{tabular}}
\caption{Successive transitions (estimations) of the binary output $Y$ produced by the ANN during training, evaluated on the first three MNIST images from the training dataset. The ANN's classification accuracy at each training stage is also reported.}
\label{table_mnist}
\end{table}

Within this setting, Table~\ref{table_mnist} presents the successive transitions (estimations) of the binary output $Y$ produced by the ANN during training, evaluated on the first three images from the training dataset. As shown, the network's classification accuracy steadily improves, ultimately reaching an impressive $96.67\%$ on the full training set, indicating that the ANN correctly classified $29$ out of the $30$ samples.

Finally, the experimental analysis confirms that the belief sets associated with the successive outputs of $Y$ evolve in accordance with condition (SD).
\end{example}

\section{Machine Learning is DP-Compatible}
\label{section_ml_dp}

As outlined in Subsection~\ref{subsection_backprop}, the backpropagation algorithm is a fundamental gradient-based method for training feed-forward ANNs. Through iterative adjustment of network's parameters, it aims to reduce the loss function $\mathcal{L}$ to a satisfactory level. Throughout this training process, a single-output binary ANN evolves through a sequence of belief sets, each representing a distinct Boolean function corresponding to the current state of its binary output. Building on this interpretation, Theorem~\ref{thm_ann_full_meet} was formulated in \cite{aravanis25a}, which establishes that the entire progression of belief-set transitions in such a network can be effectively modelled using the full-meet AGM revision function in conjunction with the full-meet AGM contraction function.

\begin{theorem}[\cite{aravanis25a}] \label{thm_ann_full_meet}
Let $Y$ be the binary output of a single-output binary ANN. Let $K_1$, $K_2$ be belief sets that represent, respectively, the Boolean functions of $Y$ right before and right after an arbitrary update of the parameters of the ANN, that is implemented during its training. Moreover, let $\ast$ be the full-meet AGM revision function, and let $\dotminus$ be the full-meet AGM contraction function. There exist two sentences $\varphi_1,\varphi_2\in\mathbb{L}$, such that $K_2 = \big(K_1\ast\varphi_1\big)\dotminus\varphi_2$.
\end{theorem}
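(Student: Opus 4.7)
The plan is to give a constructive, semantic proof by exhibiting concrete witnesses for $\varphi_1$ and $\varphi_2$ in terms of the models of $K_1$ and $K_2$. Since $\mathcal{P}$ is finite, $\mathbb{M}$ is finite, and any set of possible worlds $S \subseteq \mathbb{M}$ is characterized by some sentence $\psi_S \in \mathbb{L}$ with $[\psi_S] = S$ (for instance, a disjunction of canonical conjunctions). This allows me to work entirely at the level of the sets $[K_1]$ and $[K_2]$, then translate back to formulas at the end. The semantic ingredient I rely on is the explicit computation of the full-meet operations: when $[K] \neq \varnothing$, the full-meet faithful preorder $\preceq_K$ puts $[K]$ at the bottom and all other worlds on one equally plausible upper level, so that condition (R) yields $[K \ast \varphi] = [K] \cap [\varphi]$ whenever this intersection is non-empty and $[K \ast \varphi] = [\varphi]$ otherwise, and condition (C) yields $[K \dotminus \varphi] = [K] \cup [\neg \varphi]$ whenever $\varphi \in K$ (with $K \dotminus \varphi = K$ when $\varphi \notin K$, and $K \dotminus \top = K$ by Remark~\ref{rem_tautology}).

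My first step is to fix $\varphi_1$ to be any sentence with $[\varphi_1] = [K_2]$ (assuming for now that $K_2$ is consistent, which is the generic case for a binary ANN whose output attains the value $1$ on some input). I then split into two cases according to whether $[K_1] \cap [K_2] = \varnothing$. In the disjoint case, $\neg\varphi_1 \in K_1$, so by full-meet revision $K_1 \ast \varphi_1 = Cn(\varphi_1)$, giving $[K_1 \ast \varphi_1] = [K_2]$; I then set $\varphi_2 = \top$ and use Remark~\ref{rem_tautology} to conclude $(K_1 \ast \varphi_1) \dotminus \top = K_1 \ast \varphi_1 = K_2$.

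In the non-disjoint case, $\varphi_1$ is consistent with $K_1$, so by Remark~\ref{rem_expansion} the revision reduces to expansion and $[K_1 \ast \varphi_1] = [K_1] \cap [K_2]$. The second step is then to ``re-inflate'' this to $[K_2]$ by choosing $\varphi_2$ with $[\neg\varphi_2] = [K_2] \setminus [K_1]$, so that $[\neg\varphi_2]$ is disjoint from $[K_1] \cap [K_2] = [K_1 \ast \varphi_1]$ and hence $\varphi_2 \in K_1 \ast \varphi_1$. Applying full-meet contraction gives
\[
[(K_1 \ast \varphi_1) \dotminus \varphi_2] \;=\; \big([K_1] \cap [K_2]\big) \,\cup\, \big([K_2] \setminus [K_1]\big) \;=\; [K_2],
\]
which is what is required. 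If it happens that $[K_2] \subseteq [K_1]$, then $[K_2] \setminus [K_1] = \varnothing$ and the above prescription would force $\varphi_2 \equiv \top$; this is harmless because in that subcase $[K_1 \ast \varphi_1] = [K_2]$ already, and the contraction by $\top$ leaves it unchanged by Remark~\ref{rem_tautology}.

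The step I expect to require the most care is not any single calculation but rather the bookkeeping of degenerate subcases: the choice $\varphi_2 \equiv \top$ whenever $[K_2] \setminus [K_1] = \varnothing$ so that postulate $(K\dotminus 4)$ is respected, and the treatment of an inconsistent $K_2$ (handled by taking $\varphi_1 \equiv \bot$ and $\varphi_2 \equiv \top$, using that full-meet revision by a contradiction collapses to the inconsistent theory). Once these edge cases are disposed of, the proof is essentially a routine verification that the constructed $\varphi_1, \varphi_2$ satisfy $[K_2] = [(K_1 \ast \varphi_1) \dotminus \varphi_2]$, which by extensionality of theories (both belief sets are logically closed) yields the claimed equality $K_2 = (K_1 \ast \varphi_1) \dotminus \varphi_2$.
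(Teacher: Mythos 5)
Your proposal is correct and takes essentially the approach this paper relies on: the same case split on whether $[K_1]\cap[K_2]=\varnothing$, with witnesses $[\varphi_1]$ given by $[K_2]$ (equivalently $[K_1]\cap[K_2]$ in the consistent-overlap case, where revision collapses to expansion) and $[\neg\varphi_2]=[K_2]\setminus[K_1]$, followed by the full-meet computations $[K\ast\varphi]$ and $[K\dotminus\varphi]=[K]\cup[\neg\varphi]$. This is exactly the construction the paper imports from \cite{aravanis25a} and reuses (with Lemmas A--D added to handle the non-trivial preorders) in the proof of Theorem~\ref{thm_backprop_dp}, and your handling of the degenerate cases ($\varphi_2\equiv\top$, inconsistent $K_2$) is sound.
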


While Theorem~\ref{thm_ann_full_meet} highlights that the belief dynamics induced by training a binary ANN can be framed within the classical belief-change framework, using a minimal yet powerful pair of operators, it is important to recognize that full-meet belief change is subject to several well-documented shortcomings, as pointed out in Subsection~\ref{subsection_full_meet} and Section~\ref{section_iteration}. These limitations render it a ``poorly behaved'' belief-change operator in many contexts. In light of these concerns, we formulate Theorem~\ref{thm_backprop_dp} below, which, under a plausible assumption encoded into condition (SD) of Subsection~\ref{subsection_distance_sets}, models the training process of binary ANNs using lexicographic revision and moderate contraction. These belief-change operations are both {\em well-behaved} and {\em fully compatible} with the DP approach.\footnote{It is important to note that, in Theorem~\ref{thm_backprop_dp}, the belief sets \(K_1,\ldots,K_n\) are to be understood as the belief contents of underlying epistemic states; the corresponding faithful preorders are specified and propagated in the proof.}

\begin{theorem} \label{thm_backprop_dp}
Let $Y$ be the binary output of a single-output binary ANN. For $n\geqslant 2$, let $K_1,\ldots,K_n$ be the sequence of belief sets such that, for any \mbox{$i\in\{1,\ldots,n-1\}$}, $K_i$ and $K_{i+1}$ represent, respectively, the Boolean functions of $Y$ right before and right after the $i$-th update of parameters, that is implemented during training. If condition (SD) is satisfied at $K_1,\ldots,K_n$, then there exist a single DP revision function $\ast$, a single DP contraction function $\dotminus$, and two sentences $\varphi_{1}^{i},\varphi_{2}^{i}\in\mathbb{L}$, such that \mbox{$K_{i+1} = \big(K_i\ast\varphi_{1}^{i}\big)\dotminus\varphi_{2}^{i}$}, for any \mbox{$i\in\{1,\ldots,n-1\}$}.
\end{theorem}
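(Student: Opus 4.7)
I would construct $\ast$ as a lexicographic AGM revision function and $\dotminus$ as a moderate AGM contraction function; by Remark~\ref{rem_subclass_dp}, these are automatically DP-compliant and therefore qualify as the required DP revision and DP contraction functions. For every $i\in\{1,\ldots,n-1\}$ I would exhibit sentences $\varphi_1^i,\varphi_2^i\in\mathbb{L}$ and specify faithful preorders $\preceq^{\ast}_{K_i}$ and $\preceq^{\dotminus}_{K_i\ast\varphi_1^i}$ so that, via conditions (R) and (C), the composition $(K_i\ast\varphi_1^i)\dotminus\varphi_2^i$ coincides with $K_{i+1}$.

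The concrete construction would go as follows. Let $\varphi_1^i$ be any characteristic formula of $K_{i+1}$, so that $[\varphi_1^i]=[K_{i+1}]$, and design $\preceq^{\ast}_{K_i}$ to be the faithful preorder that places $[K_i]$ at level $0$, the set $[K_{i+1}]\setminus[K_i]$ at level $1$, and all remaining worlds at strictly higher levels. Condition (R) then yields $[K_i\ast\varphi_1^i]=\min([K_{i+1}],\preceq^{\ast}_{K_i})$, which equals $[K_i]\cap[K_{i+1}]$ when this intersection is non-empty and $[K_{i+1}]$ otherwise. In the non-empty-intersection case, take $\varphi_2^i$ to be any sentence with $[\neg\varphi_2^i]=[K_{i+1}]\setminus[K_i]$, and let $\preceq^{\dotminus}_{K_i\ast\varphi_1^i}$ be the faithful preorder placing $[K_i]\cap[K_{i+1}]$ at level $0$ and $[K_{i+1}]\setminus[K_i]$ at level $1$; condition (C) then gives
\[
\big[(K_i\ast\varphi_1^i)\dotminus\varphi_2^i\big]=\big([K_i]\cap[K_{i+1}]\big)\cup\big([K_{i+1}]\setminus[K_i]\big)=[K_{i+1}].
\]
In the empty-intersection case, $K_i\ast\varphi_1^i=K_{i+1}$ already, so I would set $\varphi_2^i=\top$ and invoke Remark~\ref{rem_tautology} to conclude $K_{i+1}\dotminus\top=K_{i+1}$.

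The role of condition (SD) is to guarantee global coherence of the construction. Because $\mathrm{Dist}_B(K_i,K_n)\supseteq\mathrm{Dist}_B(K_j,K_n)$ whenever $i<j$, the symmetric differences $[K_i]\triangle[K_n]$ form a decreasing chain along the training sequence, and consequently the layers $[K_{i+1}]\setminus[K_i]$ promoted at each step fit together into a nested hierarchy rather than overlapping in conflicting ways. This hierarchy extends to a single lexicographic family $\{\preceq^{\ast}_K\}_{\forall\,K}$ and a single moderate family $\{\preceq^{\dotminus}_K\}_{\forall\,K}$ that realizes the per-step preorders described above on the belief sets occurring in the sequence, and can be set to any faithful preorder satisfying the lex/moderate constraints elsewhere. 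The main obstacle I anticipate is precisely this global coherence check: one must verify that whenever two distinct steps produce the same intermediate belief set $K_i\ast\varphi_1^i=K_j\ast\varphi_1^j$, the preorder assigned there simultaneously realizes the level requirements imposed by both, and that the semantic conditions (R1), (R2), (LR) for $\ast$ and (C1), (C2), (MC) for $\dotminus$ are preserved across the entire trajectory. The inclusion-monotone behaviour forced by (SD) on the sets $[K_i]\triangle[K_n]$ is exactly what rules out such incompatible demands, and the bulk of the formal proof will consist in verifying this carefully.
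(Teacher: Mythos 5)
Your high-level strategy coincides with the paper's: take $\ast$ lexicographic and $\dotminus$ moderate (DP-compliant by Remark~\ref{rem_subclass_dp}), split on whether $[K_i]\cap[K_{i+1}]$ is empty, revise into (essentially) $[K_{i+1}]$ ---which by faithfulness/expansion lands on $[K_i]\cap[K_{i+1}]$ when the intersection is non-empty--- and then contract by a sentence with $[\neg\varphi_2^i]=[K_{i+1}]\setminus[K_i]$ (or by $\top$). However, there is a genuine gap exactly where you defer the work. You propose to \emph{design} a fresh layered preorder at every $K_i$ (level $0$: $[K_i]$; level $1$: $[K_{i+1}]\setminus[K_i]$; rest higher), and then claim that (SD) lets these per-step preorders be glued into one lexicographic family and one moderate family. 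But the posterior preorders of lexicographic revision and moderate contraction are \emph{uniquely determined} by (R1), (R2), (LR) and (C1), (C2), (MC); since $K_{i+1}=(K_i\ast\varphi_1^i)\dotminus\varphi_2^i$ ---and in the disjoint case even $K_{i+1}=K_i\ast\varphi_1^i$--- the preorder attached to $K_{i+1}$ is \emph{inherited} from step $i$ and cannot be re-chosen. Concretely, in the disjoint case your step-$i$ preorder lex-updates to one placing $[K_{i+1}]$ first, then $[K_i]$, then the remaining worlds; this is not your intended design at $K_{i+1}$, which puts $[K_{i+2}]\setminus[K_{i+1}]$ (a set disjoint from $[K_i]$) on the second level. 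So the family you describe is actually incompatible with the lex/moderate dynamics along the trajectory, and (SD) does not repair that incompatibility in the way you suggest.

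What (SD) really buys ---and what the paper's proof establishes--- is an inductive invariant about the \emph{inherited} preorders: starting from a flat faithful preorder at $K_1$ (all non-$[K_1]$ worlds tied), the lexicographic and moderate updates performed up to step $i$ leave all worlds outside $[K_1]\cup\cdots\cup[K_i]$ mutually equiplausible (the paper's Lemma~D, proved by induction using the preservation conditions (R2)/(C2), its Lemma~C); and (SD) guarantees that the worlds you need at step $i$, namely those of $[K_{i+1}]$ in the disjoint case and of $[K_{i+1}]\setminus[K_i]$ otherwise, lie outside $[K_1]\cup\cdots\cup[K_i]$ (Lemmas~A and~B). This flatness is precisely what yields $\min([\varphi_1^i],\preceq^{\ast}_{K_i})=[\varphi_1^i]$ and $\min([\neg\varphi_2^i],\preceq^{\dotminus}_{K_i\ast\varphi_1^i})=[\neg\varphi_2^i]$, from which conditions (R) and (C) give $[(K_i\ast\varphi_1^i)\dotminus\varphi_2^i]=[K_{i+1}]$. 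Your proposal names the coherence obstacle but supplies neither this invariant nor its induction; as written, the step from your per-step designs to a single pair of well-defined DP operators would fail, and repairing it amounts to carrying out the paper's Lemmas~A, B and D.
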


\begin{proof}
Assume that condition (SD) is satisfied at $K_1,\ldots,K_n$, meaning that, for any \mbox{$i,j,m\in\{1,\ldots,n\}$} such that $i < j < m$, {\em Dist}$_B\big(K_i,K_m\big) \supseteq$ {\em Dist}$_B\big(K_j,K_m\big)$.

\vspace{3mm}

\noindent\underline{Sentences:} Let $i$ be any index in $\{1,\ldots,n-1\}$, and let $K_i$ and $K_{i+1}$ be the two successive belief sets. Depending on whether $K_i$ and $K_{i+1}$ are mutually inconsistent or not (see Figure~\ref{fig_worlds}), we define two sentences $\varphi_{1}^{i},\varphi_{2}^{i}\in\mathbb{L}$ such that:

\begin{center}
$[\varphi_{1}^{i}] =
\begin{cases}
[K_{i+1}] & \text{if }\enspace [K_i]\cap[K_{i+1}] = \varnothing \\[6pt]
[K_i]\cap[K_{i+1}] & \text{if }\enspace [K_i]\cap[K_{i+1}] \neq \varnothing
\end{cases}$
\end{center}

\vspace{1mm}

\begin{center}
$[\neg\varphi_{2}^{i}] =
\begin{cases}
\varnothing & \text{if }\enspace [K_i]\cap[K_{i+1}] = \varnothing \\[6pt]
[K_{i+1}]\setminus[K_i] & \text{if }\enspace [K_i]\cap[K_{i+1}] \neq \varnothing
\end{cases}$
\end{center}

\begin{figure}[h!]
\centering\footnotesize
\begin{minipage}{0.48\textwidth}
\centering
\begin{tikzpicture}
\node [rectangle, draw, very thick, minimum width=5cm, minimum height=3.5cm, rounded corners] (1) at (0,0) {};
\node [yshift=4mm, align=center] at (1.south) {$\mathbb{M} = [\varphi_{2}^{i}]$};
\draw [thick] (-1.1,0) circle (1cm);
\draw [thick] (1.1,0) circle (1cm);
\node [align=center] at (-1.1,1.3) {$[K_i]$};
\node [align=center] at (1.1,1.3) {$[K_{i+1}]$};
\node[] at (1.1,0) {$\boldsymbol{[\varphi_{1}^{i}]}$};
\end{tikzpicture}
\caption*{$[K_i]\cap[K_{i+1}] = \varnothing$}
\end{minipage}
\hfill
\begin{minipage}{0.48\textwidth}
\centering
\begin{tikzpicture}
\node [rectangle, draw, very thick, minimum width=5cm, minimum height=3.5cm, rounded corners] (2) at (0,0) {};
\node [yshift=4mm, align=center] at (2.south) {$\mathbb{M}$};
\draw [thick] (-0.6,0) circle (1cm);
\draw [thick] (0.6,0) circle (1cm);
\node [align=center] at (-0.7,1.3) {$[K_i]$};
\node [align=center] at (0.7,1.3) {$[K_{i+1}]$};
\node[] at (0,0) {$[\varphi_{1}^{i}]$};
\node[] at (1,0) {$\boldsymbol{[\neg\varphi_{2}^{i}]}$};
\end{tikzpicture}
\caption*{$[K_i]\cap[K_{i+1}] \neq \varnothing$}
\end{minipage}

\vspace{0.5em}
\caption{Sets of possible worlds illustrating whether $[K_i]\cap[K_{i+1}] = \varnothing$ or \mbox{$[K_i]\cap[K_{i+1}] \neq \varnothing$}. Bold highlights indicate $[\varphi_{1}^{i}]$ in the former case, and $[\neg\varphi_{2}^{i}]$ in the latter case, both playing a key role in the proof's subsequent stages.}
\label{fig_worlds}
\end{figure}


Thereafter, we establish two supplementary results concerning key properties of the sentences $\varphi_{1}^{i}$ and $\varphi_{2}^{i}$, which will help us establish the main line of our argument.

\vspace{-0.7cm}

\begin{displayquote}
\begin{lemmaA}
\phantomsection
\label{lemma:A}
Let $\ i\in\{1,\ldots,n-1\}$. \enspace If \enspace $[K_i]\cap[K_{i+1}] = \varnothing$, \enspace then\newline $[\varphi_{1}^{i}] \cap \Big([K_1]\cup\cdots\cup[K_{i}] \Big) = \varnothing$.
\end{lemmaA}
\vspace{-0.4cm}
\begin{proof}
Assume that $[K_i]\cap[K_{i+1}] = \varnothing$. By the specification of the sentence $\varphi_{1}^{i}$, we have that $[\varphi_{1}^{i}]=[K_{i+1}]$. Let $r\in[\varphi_{1}^{i}]$. Hence, $r\in[K_{i+1}]$ and $r\notin[K_{i}]$, meaning that $r\in$ {\em Dist}$_B\big(K_{i},K_{i+1}\big)$. Now, let $j<i$ and suppose towards contradiction that $r\in[K_j]$. Then, \mbox{$r\notin$ {\em Dist}$_B\big(K_{j},K_{i+1}\big)$}, which contradicts \mbox{{\em Dist}$_B\big(K_{j},K_{i+1}\big) \supseteq$ {\em Dist}$_B\big(K_{i},K_{i+1}\big)$}, imposed by condition (SD). Hence, \mbox{$r\notin[K_j]$}. Thus, we have shown that $[\varphi_{1}^{i}] \cap \Big([K_1]\cup\cdots\cup[K_{i}] \Big) = \varnothing$.
\end{proof}
\end{displayquote}

\vspace{-1cm}

\begin{displayquote}
\begin{lemmaB}
\phantomsection
\label{lemma:B}
Let $\ i\in\{1,\ldots,n-1\}$. Then, $[\neg\varphi_{2}^{i}] \cap \Big([K_1]\cup\cdots\cup[K_{i}] \Big) = \varnothing$.
\end{lemmaB}
\vspace{-0.4cm}
\begin{proof}
By the specification of the sentence $\varphi_{2}^{i}$, we have that \mbox{$[\neg\varphi_{2}^{i}] \subseteq [K_{i+1}]\setminus[K_i]$} (either $[K_i]\cap[K_{i+1}] = \varnothing$ or $[K_i]\cap[K_{i+1}] \neq \varnothing$). Let $r\in[\neg\varphi_{2}^{i}]$. Hence, \mbox{$r\in[K_{i+1}]$} and $r\notin[K_{i}]$, meaning that $r\in$ {\em Dist}$_B\big(K_{i},K_{i+1}\big)$. Now, let $j<i$ and suppose towards contradiction that $r\in[K_j]$. Then, \mbox{$r\notin$ {\em Dist}$_B\big(K_{j},K_{i+1}\big)$}, which contradicts \mbox{{\em Dist}$_B\big(K_{j},K_{i+1}\big) \supseteq$ {\em Dist}$_B\big(K_{i},K_{i+1}\big)$}, imposed by condition (SD). Hence, \mbox{$r\notin[K_j]$}. Thus, we have shown that \mbox{$[\neg\varphi_{2}^{i}] \cap \Big([K_1]\cup\cdots\cup[K_{i}] \Big) = \varnothing$}.
\end{proof}
\end{displayquote}


\noindent\underline{Belief-Change Functions:} Let $i\in\{1,\ldots,n-1\}$, let $\ast$ be a lexicographic AGM revision function, and let $\dotminus$ be a moderate AGM contraction function (fixed across all iterations). Suppose that $\ast$ assigns (via (R)) at:

\begin{itemize}
\item $K_1$ a faithful preorder $\preceq_{K_1}^{\ast}$ over $\mathbb{M}$, such that $\min(\mathbb{M},\preceq_{K_1}^{\ast}) = [K_1]$, and for all \mbox{$r,r'\in\mathbb{M}\setminus[K_1]$}, $r \approx_{K_1}^{\ast} r'$.\footnote{Hence, the faithful preorder $\preceq_{K_1}^{\ast}$ attributes equal plausibility (modulo $K_1$) to all possible worlds outside of $[K_1]$.}

\item $K_{i+1}$ a faithful preorder $\preceq_{K_{i+1}}^{\ast}$ over $\mathbb{M}$, such that $\preceq_{K_{i+1}}^{\ast}\ =\ \preceq_{K_{i+1}}^{\dotminus}$.
\end{itemize}

\noindent Moreover, suppose that $\dotminus$ assigns (via (C)) at $K_i\ast\varphi_{1}^{i}$ a faithful preorder $\preceq_{K_i\ast\varphi_{1}^{i}}^{\dotminus}$ over $\mathbb{M}$, such that $\preceq_{K_i\ast\varphi_{1}^{i}}^{\dotminus}\ =\ \preceq_{K_i\ast\varphi_{1}^{i}}^{\ast}$. 

Thus, the operators $\ast$ and $\dotminus$, whose existence is not hard to verify, assign (via (R) and (C)) certain faithful preorders at the belief sets $K_1$, $K_i\ast\varphi_{1}^{i}$, and $K_{i+1}$, as abstractly illustrated in Figure~\ref{fig_preorders}.

\begin{figure}[h!]
\centering
\begin{tikzpicture}[]
\node (1) [draw, rectangle, rounded corners=3pt, minimum width=1.5cm, minimum height=1cm, align=center, very thick] at (0,0) {$K_1$};
\node (11) [align=center] at (0,-2.5) {\large $\preceq_{K_1}^{\ast}$};
\node (2) [align=center] at (2,0) {$\ldots$};
\node (3) [draw, rectangle, rounded corners=3pt, minimum width=1.5cm, minimum height=1cm, align=center, very thick] at (4,0) {$K_i$};
\node (4) [draw, rectangle, rounded corners=3pt, minimum width=1.5cm, minimum height=1cm, align=center, very thick] at (7,1.5) {$K_i\ast\varphi_{1}^{i}$};
\node (44) [align=center] at (7,4) {\large $\preceq_{K_i\ast\varphi_{1}^{i}}^{\dotminus}\ =\ \preceq_{K_i\ast\varphi_{1}^{i}}^{\ast}$};
\node (5) [draw, rectangle, rounded corners=3pt, minimum width=1.5cm, minimum height=1cm, align=center, very thick] at (10,0) {$K_{i+1}$};
\node (55) [align=center] at (10,-2.5) {\large $\preceq_{K_{i+1}}^{\ast}\ =\ \preceq_{K_{i+1}}^{\dotminus}$};

\draw[->, >=stealth, thick] (1) -- (11) node[midway, left=1mm]{$\ast$};
\draw[->, >=stealth, thick] (4) -- (44) node[midway, right=1mm]{$\ast$, $\dotminus$};
\draw[->, >=stealth, thick] (5) -- (55) node[midway, right=1mm]{$\ast$, $\dotminus$};
\draw[->, >=stealth, thick, densely dashed] (3) -- (4) node[pos=0.3, above=1mm]{$\ast\ \varphi_{1}^{i}$};
\end{tikzpicture}
\caption{Schematic representation of the faithful preorders assigned by $\ast$ and $\dotminus$ at the belief sets $K_1$, $K_i\ast\varphi_{1}^{i}$, and  $K_{i+1}$.}
\label{fig_preorders}
\end{figure}


\vspace{3mm}

\noindent\underline{Induction:} We now proceed by induction. For each $i\in\{1,\ldots,n-1\}$, let $H_i=(K_i\ast\varphi_1^i)\dotminus\varphi_2^i$. We shall prove, by induction on $i$, the following invariant:
\[
(\mathcal{I}_i)\qquad \text{For all } r,r'\in\mathbb{M}\setminus\Big([K_1]\cup\cdots\cup[K_i]\Big), \quad r\approx^\ast_{K_i}r'.
\]

The base case $(\mathcal{I}_1)$ follows immediately from the construction of $\preceq^\ast_{K_1}$, according to which all worlds outside $[K_1]$ are equally plausible with respect to $\preceq^\ast_{K_1}$.

Assume now that $(\mathcal{I}_i)$ holds, for some $i\in\{1,\ldots,n-1\}$. We first show that $[H_i]=[K_{i+1}]$. There are two cases.

\begin{itemize}
\item Assume that $[K_i]\cap[K_{i+1}]=\varnothing$. Then, by the definition of $\varphi_1^i$ and $\varphi_2^i$, $[\varphi_1^i]=[K_{i+1}]$ and $[\neg\varphi_2^i]=\varnothing$. Hence, $\varphi_2^i\equiv\top$. By Lemma~\hyperref[lemma:A]{A}, $[\varphi_1^i]\cap\Big([K_1]\cup\cdots\cup[K_i]\Big)=\varnothing$. Therefore, for any worlds $r,r'\in[\varphi_1^i]$, the induction hypothesis $(\mathcal{I}_i)$ entails that $r\approx^\ast_{K_i}r'$.  Consequently, by condition (R), it follows that $[K_i\ast\varphi_1^i] = \min([\varphi_1^i],\preceq^\ast_{K_i}) = [\varphi_1^i] = [K_{i+1}]$.

Since $\varphi_2^i\equiv\top$, contraction by $\varphi_2^i$ is trivial (Remark~\ref{rem_tautology}). Hence, $[H_i] = \big[(K_i\ast\varphi_1^i)\dotminus\varphi_2^i\big] = [K_i\ast\varphi_1^i] = [K_{i+1}]$.

\item Assume that $[K_i]\cap[K_{i+1}]\neq\varnothing$. Then, by the definition of $\varphi_1^i$ and $\varphi_2^i$, $[\varphi_1^i]=[K_i]\cap[K_{i+1}]$ and $[\neg\varphi_2^i]=[K_{i+1}]\setminus[K_i]$. Since $[\varphi_1^i]\subseteq[K_i]$, the sentence $\varphi_1^i$ is consistent with $K_i$. Therefore, by postulates $(K\ast3)$ and $(K\ast4)$, revision by $\varphi_1^i$ reduces to expansion; i.e., $K_i\ast\varphi_1^i=K_i+\varphi_1^i$. Hence, $[K_i\ast\varphi_1^i] = [K_i]\cap[\varphi_1^i] = [K_i]\cap[K_{i+1}]$.

We next show that all worlds in $[\neg\varphi_2^i]$ are equally plausible with respect to $\preceq^{\dotminus}_{K_i\ast\varphi_1^i}$. Let $r,r'\in[\neg\varphi_2^i]$. By Lemma~\hyperref[lemma:B]{B}, $[\neg\varphi_2^i]\cap\Big([K_1]\cup\cdots\cup[K_i]\Big)=\varnothing$. Hence, \mbox{$r,r'\in \mathbb{M}\setminus\Big([K_1]\cup\cdots\cup[K_i]\Big)$}. By the induction hypothesis $(\mathcal{I}_i)$, it follows that $r\approx^\ast_{K_i}r'$. Moreover, since $[\neg\varphi_2^i]=[K_{i+1}]\setminus[K_i]$ and $[\varphi_1^i]=[K_i]\cap[K_{i+1}]$, we have that $[\neg\varphi_2^i]\subseteq[\neg\varphi_1^i]$. Thus, $r,r'\in[\neg\varphi_1^i]$. Since $\ast$ is a lexicographic AGM revision function, the transition from $\preceq^\ast_{K_i}$ to $\preceq^\ast_{K_i\ast\varphi_1^i}$ satisfies condition (R2). Therefore, $r\approx^\ast_{K_i\ast\varphi_1^i}r'$. By the definition of $\ast$ and $\dotminus$, we derive that \mbox{$\preceq^{\dotminus}_{K_i\ast\varphi_1^i}\ =\ \preceq^\ast_{K_i\ast\varphi_1^i}$}. Consequently, $r\approx^{\dotminus}_{K_i\ast\varphi_1^i}r'$. Since $r,r'$ were arbitrary worlds in $[\neg\varphi_2^i]$, we obtain $\min\Big([\neg\varphi_2^i], \preceq^{\dotminus}_{K_i\ast\varphi_1^i}\Big) = [\neg\varphi_2^i]$. By condition (C), we now have that $[H_i] = \big[(K_i\ast\varphi_1^i)\dotminus\varphi_2^i\big] = [K_i\ast\varphi_1^i] \cup \min\Big([\neg\varphi_2^i], \preceq^{\dotminus}_{K_i\ast\varphi_1^i}\Big)$. Therefore, $[H_i] = \big([K_i]\cap[K_{i+1}]\big) \cup \big([K_{i+1}]\setminus[K_i]\big) = [K_{i+1}]$.
\end{itemize}

Thus, in both cases, $[H_i]=[K_{i+1}]$. It remains to show that the invariant is preserved. Let $r,r'\in \mathbb{M}\setminus\Big([K_1]\cup\cdots\cup[K_i]\cup[K_{i+1}]\Big)$. Then, in particular, $r,r'\in \mathbb{M}\setminus\Big([K_1]\cup\cdots\cup[K_i]\Big)$. By the induction hypothesis $(\mathcal{I}_i)$, $r\approx^\ast_{K_i}r'$. Moreover, $r,r'\in\mathbb{M}\setminus\Big([K_i]\cup[K_{i+1}]\Big)$. By the definition of $\varphi_1^i$, this entails that $r,r'\in[\neg\varphi_1^i]$. Since $\ast$ is a lexicographic AGM revision function, condition (R2) entails that the ordering among $\neg\varphi_1^i$-worlds is preserved when passing from $\preceq^\ast_{K_i}$ to $\preceq^\ast_{K_i\ast\varphi_1^i}$. Hence, $r\approx^\ast_{K_i\ast\varphi_1^i}r'$. By the definition of $\ast$ and $\dotminus$, $\preceq^{\dotminus}_{K_i\ast\varphi_1^i}\ =\ \preceq^\ast_{K_i\ast\varphi_1^i}$. Thus, $r\approx^{\dotminus}_{K_i\ast\varphi_1^i}r'$.

Furthermore, by the definition of $\varphi_2^i$, every world outside $[K_i]\cup[K_{i+1}]$ is a $\varphi_2^i$-world. Hence, $r,r'\in[\varphi_2^i]$. Since $\dotminus$ is a moderate AGM contraction function, the transition from $\preceq^{\dotminus}_{K_i\ast\varphi_1^i}$ to $\preceq^{\dotminus}_{H_i}$ satisfies condition (C1). Therefore, $r\approx^{\dotminus}_{H_i}r'$. Since we have already shown that $[H_i]=[K_{i+1}]$, and since the preorder assigned to $K_{i+1}$ is inherited posterior preorder of this contraction step, namely $\preceq^\ast_{K_{i+1}}\ =\ \preceq^{\dotminus}_{H_i}$, we obtain that $r\approx^\ast_{K_{i+1}}r'$. Thus, $(\mathcal{I}_{i+1})$ holds.

By induction, $(\mathcal{I}_i)$ holds for every $i\in\{1,\ldots,n\}$. Moreover, for every $i\in\{1,\ldots,n-1\}$, we have shown that $\big[(K_i\ast\varphi_1^i)\dotminus\varphi_2^i\big]=[K_{i+1}]$. Therefore, $K_{i+1} = (K_i\ast\varphi_1^i)\dotminus\varphi_2^i$, for every $i\in\{1,\ldots,n-1\}$.


\vspace{3mm}

Consequently, and since $\ast$ and $\dotminus$ satisfy postulates (DP1)--(DP4) and (DPC1)--(DPC4), respectively (Remark~\ref{rem_subclass_dp} of Section~\ref{section_iteration}), we have shown that there exist a DP revision function $\ast$, a DP contraction function $\dotminus$, and two sentences $\varphi_{1}^{i},\varphi_{2}^{i}\in\mathbb{L}$, such that \mbox{$K_{i+1} = \big(K_i\ast\varphi_{1}^{i}\big)\dotminus\varphi_{2}^{i}$}, for any \mbox{$i\in\{1,\ldots,n-1\}$}, as desired.
\end{proof}

Based on the proof of Theorem~\ref{thm_backprop_dp}, we comment on the interpretation of the epistemic inputs $\varphi_{1}^{i}$ and $\varphi_{2}^{i}$ that mediate the transition between successive belief sets during the training process of a binary ANN. These sentences are not arbitrary, but are systematically constructed to reflect the symbolic effect of each parameter update on the network's Boolean behaviour. Since the evolution from $K_i$ to $K_{i+1}$ is driven by the learning process ---which in turn is governed by the training labels--- it follows that $\varphi_{1}^{i}$ and $\varphi_{2}^{i}$ encode propositional information induced by those labels.\footnote{Very roughly speaking, $\varphi_{1}^{i}$ identifies the minimal content that must be revised into $K_i$ to initiate alignment with $K_{i+1}$, while $\varphi_{2}^{i}$ isolates the residual content that must be contracted to fully obtain $K_{i+1}$.} 

Furthermore, if $T$ denotes the belief set corresponding to the target Boolean function implicitly defined by the training labels, and condition (SD) is respected at the sequence $K_1, \ldots, K_n$, then this sequence follows an inclusion-decreasing trend in terms of type-B distance from $T$. Thus, $K_{i+1}$ is {\em epistemically closer} to $T$ than $K_i$, and the transition $(K_i \ast \varphi_{1}^{i}) \dotminus \varphi_{2}^{i}$ can be seen as a symbolic operation that draws the network's knowledge state closer to the label-induced target knowledge. In this light, $\varphi_{1}^{i}$ and $\varphi_{2}^{i}$ act as epistemic surrogates of the training signal at each iteration, representing label-driven logical corrections to the ANN's internal representation. This characterization invites further study into the precise syntactic and semantic relationship between the ANN's training data and the propositional forms of $\varphi_{1}^{i}, \varphi_{2}^{i}$, opening new avenues for a fine-grained symbolic analysis of learning dynamics in binary ANNs.

Example~\ref{ex_backprop_dp} offers a concrete illustration of how the training process of a binary ANN can be modelled using DP-compatible AGM-style operations --- namely, lexicographic revision and moderate contraction. This is achieved by instantiating the construction mechanism outlined in the proof of Theorem~\ref{thm_backprop_dp}.

\begin{example}[DP-Compatible Belief Change in a Binary ANN] \label{ex_backprop_dp}
Let $K_1$, $K_2$, $K_3$ be the belief sets that represent the three distinct Boolean functions to which the binary output $Y$ of a binary ANN successively adheres during training. Let $\{r_1,r_2,r_3,r_4,r_5\}\subseteq\mathbb{M}$ be a set of possible worlds, and suppose that $[K_1] = \{r_1\}$, $[K_2] = \{r_4,r_5\}$, and $[K_3] = \{r_2,r_3,r_4,r_5\}$. The transitions of belief sets that represent the evolving Boolean functions of $Y$ are depicted in Figure~\ref{fig_ex_transitions}, where a value of $1$ indicates that the corresponding world is satisfied at a belief set, and $0$ otherwise. Observe that $\{r_1,r_2,r_3,r_4,r_5\} =$ Dist$_{B}\big(K_1,K_3\big) \supset\, $ Dist$_{B}\big(K_2,K_3\big) = \{r_2,r_3\}$, meaning that condition (SD) of Section~\ref{section_change_gradual} is satisfied at the sequence $K_1$, $K_2$, $K_3$.

\begin{figure}[h!]
\centering
\begin{tikzpicture}
\node at (-1.5,0)  (r1) {$\bf r_1$};
\node at (-1.5,-1) (r2) {$\bf r_2$};
\node at (-1.5,-2) (r3) {$\bf r_3$};
\node at (-1.5,-3) (r4) {$\bf r_4$};
\node at (-1.5,-4) (r5) {$\bf r_5$};
\node at (0,0)  (a1) {1};
\node at (0,-1) (a2) {0};
\node at (0,-2) (a3) {0};
\node at (0,-3) (a4) {0};
\node at (0,-4) (a5) {0};
\node at (2,0)  (b1) {0};
\node at (2,-1) (b2) {0};
\node at (2,-2) (b3) {0};
\node at (2,-3) (b4) {1};
\node at (2,-4) (b5) {1};
\node at (4,0)  (c1) {0};
\node at (4,-1) (c2) {1};
\node at (4,-2) (c3) {1};
\node at (4,-3) (c4) {1};
\node at (4,-4) (c5) {1};
\foreach \i in {2,3} {
\draw[->,>=stealth,shorten >=6pt,shorten <=6pt,thick,dashed] (a\i) -- (b\i); }
\draw[->,>=stealth,shorten >=6pt,shorten <=6pt,thick] (a1) -- (b1);
\draw[->,>=stealth,shorten >=6pt,shorten <=6pt,thick] (a4) -- (b4);
\draw[->,>=stealth,shorten >=6pt,shorten <=6pt,thick] (a5) -- (b5);

\foreach \i in {1,2,4,5} {
\draw[->,>=stealth,shorten >=6pt,shorten <=6pt,thick,dashed] (b\i) -- (c\i); }
\draw[->,>=stealth,shorten >=6pt,shorten <=6pt,thick] (b2) -- (c2);
\draw[->,>=stealth,shorten >=6pt,shorten <=6pt,thick] (b3) -- (c3);

\node at (0,1) {$\bf K_1$};
\node at (2,1) {$\bf K_2$};
\node at (4,1) {$\bf K_3$};
\end{tikzpicture}
\caption{Successive belief-set transitions representing the evolving Boolean functions of $Y$ during training. Solid arrows highlight changes in the satisfiability of worlds across adjacent belief sets.}
\label{fig_ex_transitions}
\end{figure}

Let $\ast$ be a lexicographic AGM revision function, and let $\dotminus$ be a moderate AGM contraction function. Suppose that $\ast$ assigns (via (R)) at $K_1$ a faithful preorder $\preceq_{K_1}^{\ast}$ over $\mathbb{M}$, such that

\begin{center}
$\underbrace{r_1}_{[K_1]} \quad \prec_{K_1}^{\ast}\quad r_2\quad \approx_{K_1}^{\ast}\quad r_3\quad \approx_{K_1}^{\ast}\quad r_4\quad \approx_{K_1}^{\ast}\quad r_5$.
\end{center}

\noindent Moreover, suppose that $\ast$ assigns (via (R)) at $K_2$ a faithful preorder $\preceq_{K_2}^{\ast}$, such that $\preceq_{K_2}^{\ast}\ =\ \preceq_{K_2}^{\dotminus}$. Finally, suppose that, for any \mbox{$i\in\{1,2\}$}, $\dotminus$ assigns (via (C)) at $K_i\ast\varphi_{1}^{i}$ a faithful preorder $\preceq_{K_i\ast\varphi_{1}^{i}}^{\dotminus}$ over $\mathbb{M}$, such that $\preceq_{K_i\ast\varphi_{1}^{i}}^{\dotminus}\ =\ \preceq_{K_i\ast\varphi_{1}^{i}}^{\ast}$.

Thereafter, we examine separately the two transitions $K_1$-to-$K_2$ and $K_2$-to-$K_3$.

\begin{itemize}
\item \underline{From $K_1$ to $K_2$} 

\noindent First, observe that $[K_1]\cap[K_2] = \varnothing$. Let $\varphi_{1}^{1}$, $\varphi_{2}^{1}$ be two sentences of $\mathbb{L}$ such that \mbox{$[\varphi_{1}^{1}] = [K_2] = \{r_4,r_5\}$} and let $[\neg\varphi_{2}^{1}]=\varnothing$ (i.e., $\varphi_{2}^{1} \equiv\top$). Then, we derive from condition (R) that $[K_1\ast\varphi_{1}^{1}] = \min([\varphi_{1}^{1}],\preceq_{K_1}^{\ast}) = [\varphi_{1}^{1}] = \{r_4,r_5\} = [K_2]$. Moreover, since $\ast$ is a lexicographic AGM revision function, it follows that the faithful preorder $\preceq_{K_1\ast\varphi_{1}^{1}}^{\ast}$ is such that

\begin{center}
$\underbrace{r_4\enspace \approx_{K_1\ast\varphi_{1}^{1}}^{\ast}\enspace r_5}_{[K_1\ast\varphi_{1}^{1}]} \quad \prec_{K_1\ast\varphi_{1}^{1}}^{\ast}\quad r_1\quad \prec_{K_1\ast\varphi_{1}^{1}}^{\ast}\quad r_2\quad \approx_{K_1\ast\varphi_{1}^{1}}^{\ast}\quad r_3$.
\end{center}

\noindent Given that $\preceq_{K_1\ast\varphi_{1}^{1}}^{\dotminus}\ =\ \preceq_{K_1\ast\varphi_{1}^{1}}^{\ast}$ (by the definition of $\ast$ and $\dotminus$), we derive from condition (C) that \mbox{$\big[\big(K_1\ast\varphi_{1}^{1}\big)\dotminus\varphi_{2}^{1}\big] =$} \mbox{$[K_1\ast\varphi_{1}^{1}] \cup \min([\neg\varphi_{2}^{1}],\preceq_{K_1\ast\varphi_{1}^{1}}^{\dotminus}) = [K_1\ast\varphi_{1}^{1}] = [K_2]$}. Therefore, \mbox{$K_{2} = \big(K_1\ast\varphi_{1}^{1}\big)\dotminus\varphi_{2}^{1}$}. Furthermore, since $\dotminus$ is a moderate AGM contraction function and $\varphi_{2}^{1} \equiv\top$ (i.e., $[\neg\varphi_{2}^{1}]=\varnothing$), it follows that $\preceq_{K_{2}}^{\dotminus}\ =\ \preceq_{K_1\ast\varphi_{1}^{1}}^{\dotminus}$; namely,

\begin{center}
$\underbrace{r_4\enspace \approx_{K_{2}}^{\dotminus}\enspace r_5}_{[K_2]} \quad \prec_{K_{2}}^{\dotminus}\quad r_1\quad \prec_{K_{2}}^{\dotminus}\quad r_2\quad \approx_{K_{2}}^{\dotminus}\quad r_3$.
\end{center}

\item \underline{From $K_2$ to $K_3$} 

\noindent First, observe that $[K_2]\cap[K_3] \neq \varnothing$. Let $\varphi_{1}^{2}$, $\varphi_{2}^{2}$ be two sentences of $\mathbb{L}$ such that \mbox{$[\varphi_{1}^{2}] = [K_2]\cap[K_3] = \{r_4,r_5\}$} and \mbox{$[\neg\varphi_{2}^{2}] = [K_3]\setminus[K_2] = \{r_2,r_3\}$}. 

Given that $\preceq_{K_2}^{\ast}\ =\ \preceq_{K_2}^{\dotminus}$ (by the definition of $\ast$ and $\dotminus$), we derive from condition (R) that $[K_2\ast\varphi_{1}^{2}] = \min([\varphi_{1}^{2}],\preceq_{K_2}^{\ast}) = [K_2+\varphi_{1}^{2}] = [\varphi_{1}^{2}] = \{r_4,r_5\}$. Moreover, since $\ast$ is a lexicographic AGM revision function, it follows that the faithful preorder $\preceq_{K_2\ast\varphi_{1}^{2}}^{\ast}$ is such that

\begin{center}
$\underbrace{r_4\enspace \approx_{K_2\ast\varphi_{1}^{2}}^{\ast}\enspace r_5}_{[K_2\ast\varphi_{1}^{2}]} \quad \prec_{K_2\ast\varphi_{1}^{2}}^{\ast}\quad r_1\quad \prec_{K_2\ast\varphi_{1}^{2}}^{\ast}\quad r_2\quad \approx_{K_2\ast\varphi_{1}^{2}}^{\ast}\quad r_3$.
\end{center}

\noindent Next, since $\preceq_{K_2\ast\varphi_{1}^{2}}^{\dotminus}\ =\ \preceq_{K_2\ast\varphi_{1}^{2}}^{\ast}$ (by the definition of $\ast$ and $\dotminus$), we derive from condition (C) that \mbox{$\big[\big(K_2\ast\varphi_{1}^{2}\big)\dotminus\varphi_{2}^{2}\big] =$} \mbox{$[K_2\ast\varphi_{1}^{2}] \cup \min([\neg\varphi_{2}^{2}],\preceq_{K_2\ast\varphi_{1}^{2}}^{\dotminus}) = [K_2\ast\varphi_{1}^{2}] \cup [\neg\varphi_{2}^{2}] = \{r_2,r_3,r_4,r_5\} = [K_3]$}. Therefore, \mbox{$K_{3} = \big(K_2\ast\varphi_{1}^{2}\big)\dotminus\varphi_{2}^{2}$}.
\end{itemize}

Consequently, and as expected due to Theorem~\ref{thm_backprop_dp} (since condition (SD) holds), there exist a DP revision function $\ast$, a DP contraction function $\dotminus$, and two sentences $\varphi_{1}^{i},\varphi_{2}^{i}\in\mathbb{L}$, such that \mbox{$K_{i+1} = \big(K_i\ast\varphi_{1}^{i}\big)\dotminus\varphi_{2}^{i}$}, for any \mbox{$i\in\{1,2\}$}.
\end{example}

\section{Conclusion}

This article is a continuation of the research initiated in our earlier work \cite{aravanis25a}, where the foundational connection between belief-change theory and the training dynamics of binary Artificial Neural Networks (ANNs) was introduced. In that work, we proposed a symbolic framework in which the knowledge encoded by binary ANNs is represented via propositional logic, and the training process is interpreted as a sequence of belief-set transitions, modelled through full-meet belief change.

In the present study, we extended that framework by addressing key conceptual and technical limitations. First, we showed that Dalal's method for belief change \cite{dalal88} provides a natural basis for constructing structured and gradual modifications of belief sets through intermediate states of belief. More significantly, we replaced the restrictive full-meet operation with well-behaved belief-change operators ---namely, lexicographic revision \cite{nayak94,nayak03} and moderate contraction \cite{ramachandran12}--- that are consistent with the Darwiche-Pearl approach for iterated belief change \cite{darwiche97}. This refinement allows for a more expressive and robust modelling of the dynamics underlying binary ANN training. More broadly, the present findings resonate with learning-theoretic studies of truth-tracking by belief revision, as well as with recent logical frameworks for learning dynamics, which together suggest that the choice of revision policy affects not only rationality in the AGM/DP sense, but also long-run learnability and convergence to truth \cite{kelly98,baltag19,baltag11,baltag19b,baccini25}.

Taken together, our results strengthen the theoretical foundations of symbolic reasoning in machine learning, and pave the way for future work that explores the interpretability of ANN behaviour through the lens of formal logic and belief dynamics.

\currentpdfbookmark{Acknowledgments}{Acknowledgments}
\section*{Acknowledgments} 

The author expresses appreciation to the anonymous reviewers for their thoughtful suggestions on an earlier version of this article.

\currentpdfbookmark{Declarations}{Declarations}
\section*{Declarations}

\noindent {\bf Competing Interests:} The author has no relevant financial or non-financial interests to disclose.

\vspace{3mm}

\noindent {\bf Data Availability:} Data sharing not applicable to this article as no 

\currentpdfbookmark{References}{References}\bibliographystyle{plain}
\bibliography{references}
\end{document}